\newtheorem{theorem}{Theorem}[section]
\newtheorem{corollary}{Corollary}[theorem]
\newtheorem{lemma}[theorem]{Lemma}
\newcommand{\minimize}{\operatorname{minimize}}
\newcommand{\argmin}{\operatorname{argmin}}
\newtheorem{prop}{\textit{Proposition}}
\title{\LARGE \bf
% {\color{red} Can we think of a new title, it is currently pretty big?}
3D-OGSE: Online Safe and Smooth Trajectory Generation using Generalized Shape Expansion in Unknown 3-D Environments
}
\author{Vrushabh Zinage$^{1}$, Senthil Hariharan Arul$^{2}$, Dinesh Manocha$^{2}$, Satadal Ghosh$^{1}$% <-this % stops a space
%\thanks{*This work was not supported by any organization}% <-this % stops a space
\thanks{$^{1}$Vrushabh Zinage and Satadal Ghosh are with the Department of Aerospace Engineering, Indian Institute of Technology Madras, Chennai, India
        {\tt\small ae16b017@smail.iitm.ac.in, satadal@iitm.ac.in}}%
\thanks{$^{2}$Senthil Hariharan Arul and Dinesh Manocha are at University of Maryland, College Park, USA
        {\tt\small dm@cs.umd.edu}}
}
\begin{document}
\bibliographystyle{IEEEtran}

\maketitle
\thispagestyle{empty}
\pagestyle{empty}

%%%%%%%%%%%%%%%%%%%%%%%%%%%%%%%%%%%%%%%%%%%%%%%%%%%%%%%%%%%%%%%%%%%%%%%%%%%%%%%%
\begin{abstract}
In this paper, we present an online motion planning algorithm (3D-OGSE) for generating smooth, collision-free trajectories over multiple planning iterations for 3-D agents  operating in an unknown obstacle-cluttered 3-D environment. Our approach constructs a safe-region, termed 'generalized shape', at each planning iteration, which represents the obstacle-free region based on locally-sensed environment information. A collision-free path is computed by sampling points in the generalized shape and is used to generate a smooth, time-parametrized trajectory by minimizing snap. The generated trajectories are constrained to lie within the generalized shape, which ensures the agent maneuvers in the locally obstacle-free space. As the agent reaches boundary of 'sensing shape' in a planning iteration, a re-plan is triggered by receding horizon planning mechanism that also enables initialization of the next planning iteration. Theoretical guarantee of probabilistic completeness over the entire environment and of completely collision-free trajectory generation is provided. We evaluate the proposed method in simulation on complex 3-D environments with varied obstacle-densities. We observe that each re-planing computation takes $\sim$1.4 milliseconds on a single thread of an Intel Core i5-8500 3.0 GHz CPU. In addition, our method is found to perform 4-10 times faster than several existing algorithms. In simulation over complex scenarios such as narrow passages also we observe less conservative behavior.
\end{abstract}
\section{INTRODUCTION}
% {\color{red} Check if you can generate a good picture for the first page.... Some thing like the trajectory picture you generate but would look nice if it is fancy\\What Are you planning to put on Section 3?}\\
Due to their small size and superior agility, UAVs (unmanned aerial vehicles) are increasingly being used for various applications including search and rescue, surveillance, and exploration. Generating smooth feasible collision-free trajectories for such 3-D agents, especially in obstacle-dense or cluttered 3-D environments, forms a key problem. %for these applications is important and the trajectory generation problem can be challenging as these UAV agents operate in obstacle cluttered environments. %and the trajectory generation problem is complicated due to the obstacle cluttered environment in which the agents operate. % collision-free trajectories for UAVs in 3-D environments is a challenging problem. 
This becomes more challenging when the operating environment is not known prior to the flight ~\cite{scaramuzza,Augugliaro} as it is not possible to pre-compute a collision-free trajectory in such scenarios.  
%This situation is common for applications involving post-disaster search and rescue, or exploration of forests {\color{red}[REFERENCE]}. % because the UAV may have to operate in unknown or previously un-mapped environments. 
As a consequence, we need efficient online planning algorithms that can rapidly generate a smooth and safe trajectory for 3-D agents using only local sensor data.
%compute a collision-free, dynamically feasible trajectory using only local sensor data.

There has been extensive research over last few decades on trajectory generation~\cite{Augugliaro,Chen,Kushleyev}.
Prior trajectory generation methods can broadly be grouped into three categories: (1) optimization-based algorithms~\cite{Kushleyev,Deits}; (2) search-based/motion primitives~\cite{MacAllister,Likhachev,Pivtoraiko}; and (3) sampling-based methods~\cite{sertac,lavalle,zinage}. Except few of these algorithms like \cite{Likhachev,Pivtoraiko}, most of them work based on \textit{a-priori} known complete information of environment, thus posing a limitation for real-time implementation in navigation of 3-D agent through unknown environments.
\begin{figure}[]
\centering
\begin{subfigure}[b]{0.23\textwidth}
{\centerline{\includegraphics[width=0.9\textwidth,height=3.4cm]{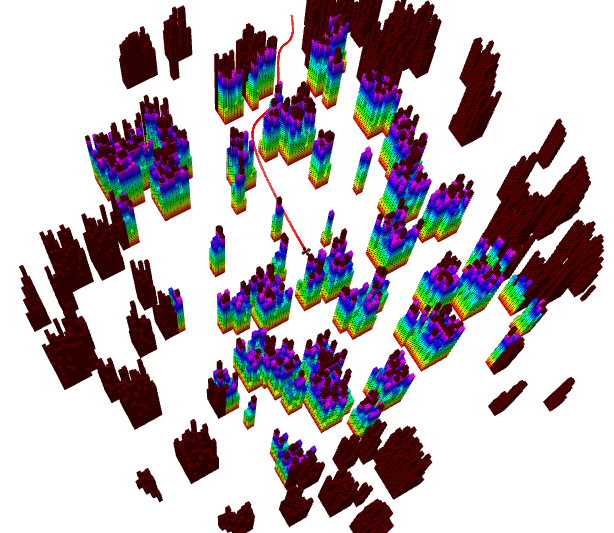}}}
\end{subfigure}
\begin{subfigure}[b]{0.23\textwidth}
\centerline{\includegraphics[width=0.9\textwidth,height=3.6cm]{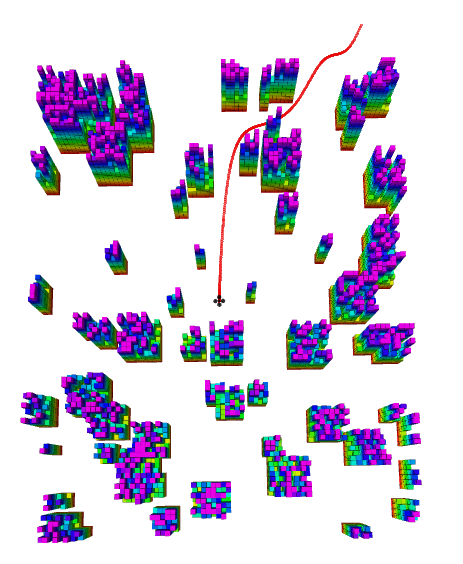}}
\end{subfigure}
\caption{A 3-D agent maneuvering in a random forest environment using 3D-OGSE. The agent has a sensing region of edge 10m, and the multicolored obstacles represent the obstacles inside the agent's sensing region. The brown obstacles are not visible to the agent at the current instance. The red curve represents an illustration of a planned trajectory generated by our approach, which can handle obstacle-dense 3-D environments.}
\label{fig:real_traj_1}
\vspace{-15pt}
\end{figure}
%do not provide real-time performance in dense, unknown 3-D scenarios.
%\textcolor{blue}{\cite{Likhachev,Pivtoraiko} is online rest all offline}
%\textcolor{red}{Many techniques~\cite{Augugliaro,lavalle,zinage} have been proposed to pre-compute collision-free paths in environments with static obstacles. However, these methods may not work well in unmapped environments.} 
To this end, for online smooth and safe trajectory generation, optimization-based ~\cite{Augugliaro, bsplineusenko2017real} and search-based methods ~\cite{chen2016online, upenn, hkust} have been explored in recent literature. In the latter approach, the free-space is approximated as a sequence of convex shapes to reduce the complexity and enable fast computation, and a notion of safe corridor has been adopted for safe online trajectory generation. 
%Sequences of axes-aligned cubes in free-space were used in ~\cite{chen2016online} to generate safe corridor, while ~\cite{upenn} leveraged collection of connected convex polyhedrons in the free-space and ~\cite{hkust} used connected spheres of a radius equal to minimum distance to any local obstacle for this purpose. While selection of cubes made the approach in ~\cite{chen2016online} efficient mainly for specific shape of obstacles, selection of restricted radius spheres made the approach in ~\cite{hkust} quite conservative in nature.
%trajectory Others rely on generating a safe corridor for an agent to compute a collision-free trajectory, but often such methods~\cite{chen2016online, upenn, hkust} approximate the free space as a convex shape to reduce the complexity and enable fast computation. In effect, these methods can be quite conservative, especially in scenarios with narrow passages.
It is also important to design online trajectory generation methods that can provide guarantees of algorithm completeness.
%Also note that most of the online trajectory generation algorithms in literature do not provide a guarantee of algorithmic completeness. While some of them state about lacking of theoretical guarantee \cite{zhou2019robust_hkust}, some of them provide a qualitative discussion on such a guarantee within a local map ~\cite{upenn}. 

%\textcolor{blue}{SG: Will mention the followings and amend the Introduction. 1) Advantages of GSE in offline planning, 2) Merger of Gen. Shape and Sensing region to form Sensing Shape, 3) Planning based on Sensing Shape to ensure safety in local environment, 4) Trigger and re-planning -- Receding Horizon planning, 5) Guarantee of probabilistic completeness over the entire environment and collision-free trajectory everywhere.}
%\textcolor{red}{Sir the first sentence in 'main contributions' is very long right (12 lines!)}

\textbf{Main Contributions:} 
%Considering that the sampling-based algorithms experience computational ease at higher dimensional environment compared to search-based methods, and that a recently presented Generalized Shape Expansion (GSE \cite{zinage} and 3D-GSE \cite{zinage_3d_gse_journal}) has been found to have significant computational advantage over several existing sampling-based offline algorithms, the notion of the generalized shape given in \cite{zinage_3d_gse_journal} is leveraged in this paper for developing 
We present a 3D online GSE (named as '3D-OGSE') algorithm for generating collision-free smooth dynamically feasible trajectories for 3-D agents operating in obstacle-cluttered \textit{a-priori} unknown 3-D environments. Our approach is based on recent offline algorithms that use the notion of the generalized shape~\cite{zinage,zinage_3d_gse_journal}. We exploit the property that sampling-based algorithms are computationally more efficient than search-based methods for higher dimensions.  Based on the locally available information on environment at every planning iteration, a receding horizon planning (RHP) mechanism is framed for trajectory re-planning. While horizon of the RHP mechanism is restricted by the sensing region of on-board sensors of 3-D agent, the re-planning trigger is planned based on a novel notion of 'sensing shape' defined using sensing region and the notion of the generalized shape \cite{zinage_3d_gse_journal}. Once a feasible path is generated at every planning iteration, snap minimization quadratic problem is formulated for generating smooth optimal trajectories that are dynamically feasible~\cite{mellinger}. We show that the use of generalized shape in sampling points during path planning and in constraining trajectory points in trajectory optimization step at every planning iteration renders the generated trajectory completely collision-free. 
%Detailed analysis of theoretical guarantee of algorithmic completeness over the whole environment in probabilistic sense and guarantee of completely collision-free trajectory generation are then provided.
%Finally, performance of the 3D-OGSE algorithm is verified and compared against that of some well-established online algorithms by extensive simulation study in varied types of complex locally-sensed environments. 
The {{novel}} components of our online approach include:
\begin{enumerate}
\item Fast and safe path planning leveraging a novel notion of `sensing shape' derived from on-board sensing region and 3D generalized shape at every planning iteration. 
\item A practically realizable sensing-restricted receding horizon planning mechanism and
fast trajectory optimization for generating dynamically feasible safe trajectory generation.
\item Theoretical guarantee of probabilistic completeness of the presented 3D-OGSE for generation of collision-free feasible path from start point to goal point over multiple planning iterations. Besides, we provide guarantee on the safety of generated overall trajectory.
\item Extensive evaluation of the efficacy of the presented 3D-OGSE algorithm under FOV-restricted, noisy sensing scenario in simulated complex environments. Extensive comparison study of the same with several other well-studied algorithms in literature reveals that the trajectory cost of the 3D-OGSE is comparable to %~\cite{upenn,hkust,bsplineusenko2017real},
them, while in terms of computation time, the 3D-OGSE is about $4-10\times$ faster than them.
\end{enumerate}

The outline of the paper is as follows. We discuss relevant literature in Section \ref{sec:prior_work}, and introduce the preliminaries and problem formulation in Section \ref{sec:problem_statement}. Subsequently, 3D-OGSE algorithm is presented in Section \ref{sec:3d_ogse_motion_planning}, followed by detailed proofs of its probabilistic completeness and guarantee of collision-free trajectory in Section \ref{sec:analysis_3d_ogse}. proves the probabilistic completeness guarantee that the agent converges to the goal position. %In addition, the a guarantee that the overall trajectory is collision free is given. 
Finally, extensive numerical simulations and comparison with state-of-the-art methods are given in Section \ref{sec:results}, followed by concluding remarks in Section \ref{sec:conclusion}.

\section{\label{sec:prior_work}Related Work}
In this section, we summarize the related work in the areas of path planning and online trajectory generation.

% \vspace*{0.05in}

\textbf{Optimization-based methods:} %Mixed-integer optimization methods \cite{Kushleyev, Deits} are commonly used to generate reliable, collision-free, dynamically feasible trajectories, but they have high computational overhead and cannot be used for online scenarios. 
Sequential convex programming (SCP) has been used in \cite{Augugliaro} and \cite{Chen} for generating smooth trajectories, while mixed-integer optimization methods have been explored in \cite{Kushleyev, Deits} for generating reliable, collision-free, dynamically feasible trajectories. Several of these methods suffered from higher computational burden.% These methods have been mainly used in known environments; due to high computation cost they are not suitable for fast online trajectory generation. 
A minimum snap formulation was presented in \cite{mellinger}  that used differential flatness and pose the trajectory generation problems as Quadratic program (QP) problem. Online 3-D trajectories can be generated using QP methods~\cite{upenn, hkust,ding2019safe_hkust_optimization}. These techniques use convex safe corridors and provide a considerable performance improvement over mixed-integer methods. The trajectory generation problem was formulated as a nonlinear optimization over penalty of smoothness and safety to obtain a locally optimal solution in \cite{zucker}. B-spline optimization was also utilized for generation of dynamically feasible smooth trajectories in \cite{bsplineusenko2017real, zhou2019robust_hkust}.

% Recent trajectory optimization methods \cite{oleynikova2016continuous}, \cite{oleynikova2018safe} and \cite{bsplineusenko2017real} involve minimizing an objective function which involves a collision cost. 
% Collision cost penalizes the trajectory points that are within the constant threshold distance to the obstacles and computing this cost every instant can be expensive. In addition, to guarantee that the trajectory is collision-free, this cost must be zero, which might not be the case every time the objective function is minimized. In \cite{bsplineusenko2017real}, B-splines are used to represent the trajectory but the main drawback of B-splines is the fact that the trajectory does not pass through the waypoints. This makes it difficult to enforce boundary constraints.

%\vspace*{0.05in}

\textbf{Search-based methods:} Search-based planning methods~\cite{MacAllister, Likhachev, Pivtoraiko} were used for fast generation of trajectories for UAVs. They build a graph by discretizing the state-space of the UAVs, where states form the nodes, and motion primitives form the edges in the graph. These methods rely on searching a large pre-computed lookup table that contains the motion primitives to identify the suitable maneuver. This problem was reduced greatly in \cite{Liu} %avoid using the large lookup table %when they plan in the lower dimensional flat state-space 
by exploiting differential flatness property of the quadrotor. B-spline-based kinodynamic (EBK) search algorithm was presented in \cite{ding2019efficient_hkust_search} to find a feasible trajectory with minimum control effort. Sequences of axes-aligned cubes in free-space were used in ~\cite{chen2016online} to generate safe corridor, while ~\cite{upenn} leveraged collection of connected convex polyhedrons in the free-space and ~\cite{hkust} used connected spheres of a radius equal to minimum distance to any local obstacle for this purpose. While selection of cubes made the approach in ~\cite{chen2016online} efficient mainly for specific shape of obstacles, selection of restricted radius spheres made the approach in ~\cite{hkust} quite conservative in nature.
%The extent of the look-up table also  affects the performance of these algorithms \cite{}{\color{red} Have to fix reference}.

%\vspace*{0.05in}

\textbf{Sampling-based methods: }Many sampling methods like PRM~\cite{prm} and RRT~\cite{rrt} have been used for collision-free path computation. Sampling-based methods such as~\cite{sertac,lavalle,zinage} avoid the explicit construction of configuration space and have been successfully used for collision-free path computation in high dimensional configuration spaces. These methods compute sampling points in the free space and tend to grow the graph towards the goal configuration. \cite{sertac} propose RRT* and PRM*, which provide asymptotic optimality. Generalized Shape Expansion (GSE)-based algorithm was recently proposed in \cite{zinage,zinage_3d_gse_journal} for 2-D and 3-D environments, respectively, which used a novel notion of generalized shape for fast and efficient free-space exploration to generate a collision-free path. It was found to show superior performance in computational time than several well-established algorithms.
% The method is based on a offline sampling-based approach, namely GSE \cite{zinage_3d_gse_journal,zinage} which uses the notion of generalized shape to compute collision-free paths in the workspace. 
An even faster planner was presented in \cite{zinage_gse_directional_journal} for 2-D environments and in \cite{zinage_gse_directional_journal_3d} for 3-D environments by embedding directional sampling scheme to the GSE for 2-D and 3-D environments, respectively. Path planning in our 3D-OGSE algorithm relies on the generalized shape-driven \cite{zinage_3d_gse_journal} fast sampling-based method, while we use snap minimization~\cite{mellinger} for fast trajectory generation. %to generate a collision-free path. After a path is generated, a trajectory is computed using .

\section{\label{sec:problem_statement}Problem Description and Preliminaries}
%In this section we first discuss about the 3-D agent and the environment under consideration in this paper and provide  notations used in this paper in Table \ref{Tab:Notation}. This is followed by the overall problem statement in Section \ref{sec:problem_formulation_subsection} and an overview of the generalized shape used in the the GSE algorithm \cite{zinage_3d_gse_journal}.
%define the notation used in the paper, discuss our assumptions, and present our mathematical formulation of the problem.
%\subsection{Notation}
%A list of notations used in our paper and their definitions is given in Table \ref{Tab:Notation}.
\begin{figure*}[]
\captionsetup[subfigure]{justification=centering}
\centering
\begin{subfigure}{0.23\textwidth}
{\includegraphics[scale=0.08]{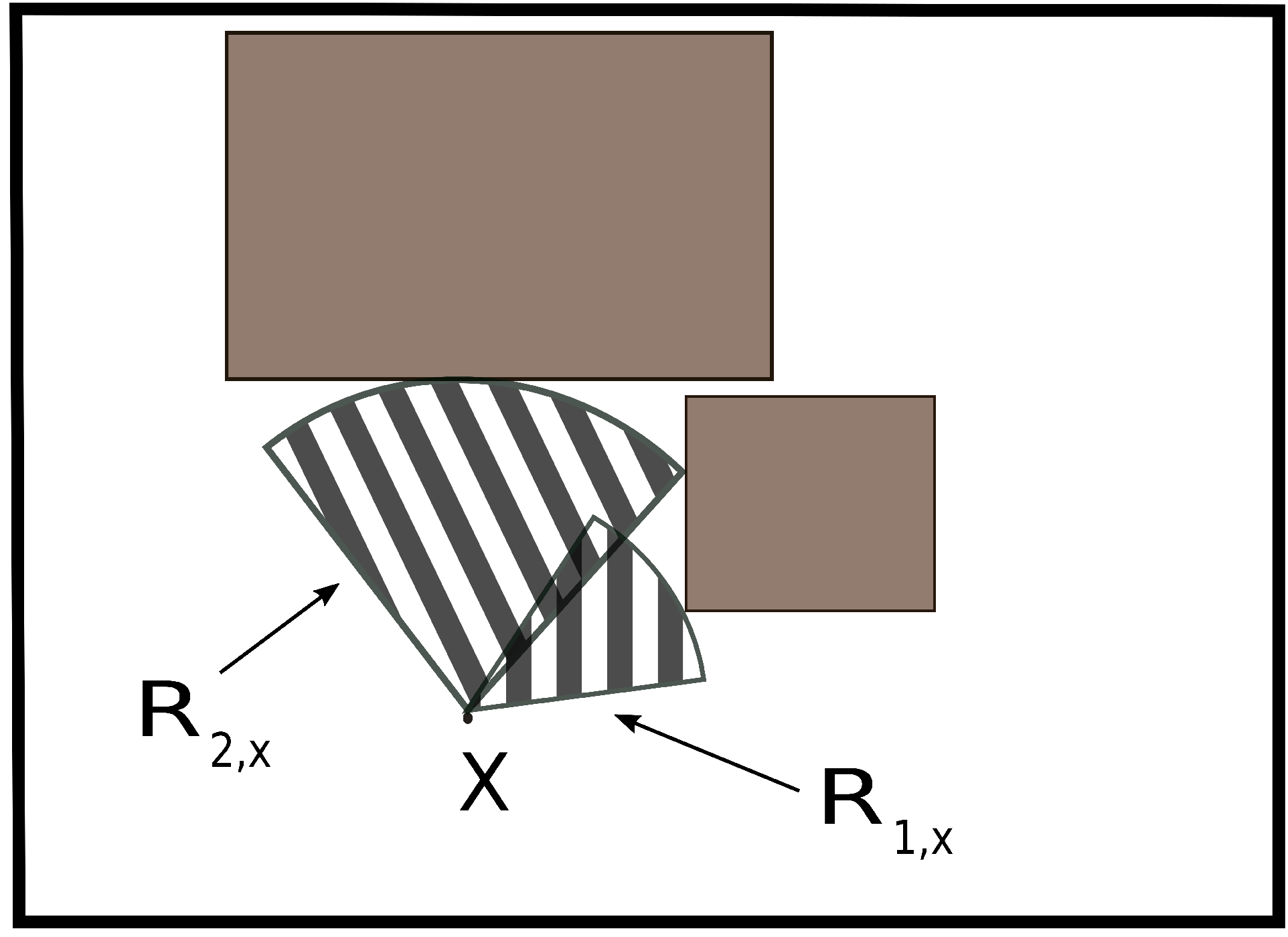}}
\caption{${\mathcal{R}_{1,\boldsymbol{X}}}$ and ${\mathcal{R}_{2,\boldsymbol{X}}}$ }
\label{fig:r1_and_r2}
\end{subfigure}
\begin{subfigure}{0.23\textwidth}
{\includegraphics[scale=0.08]{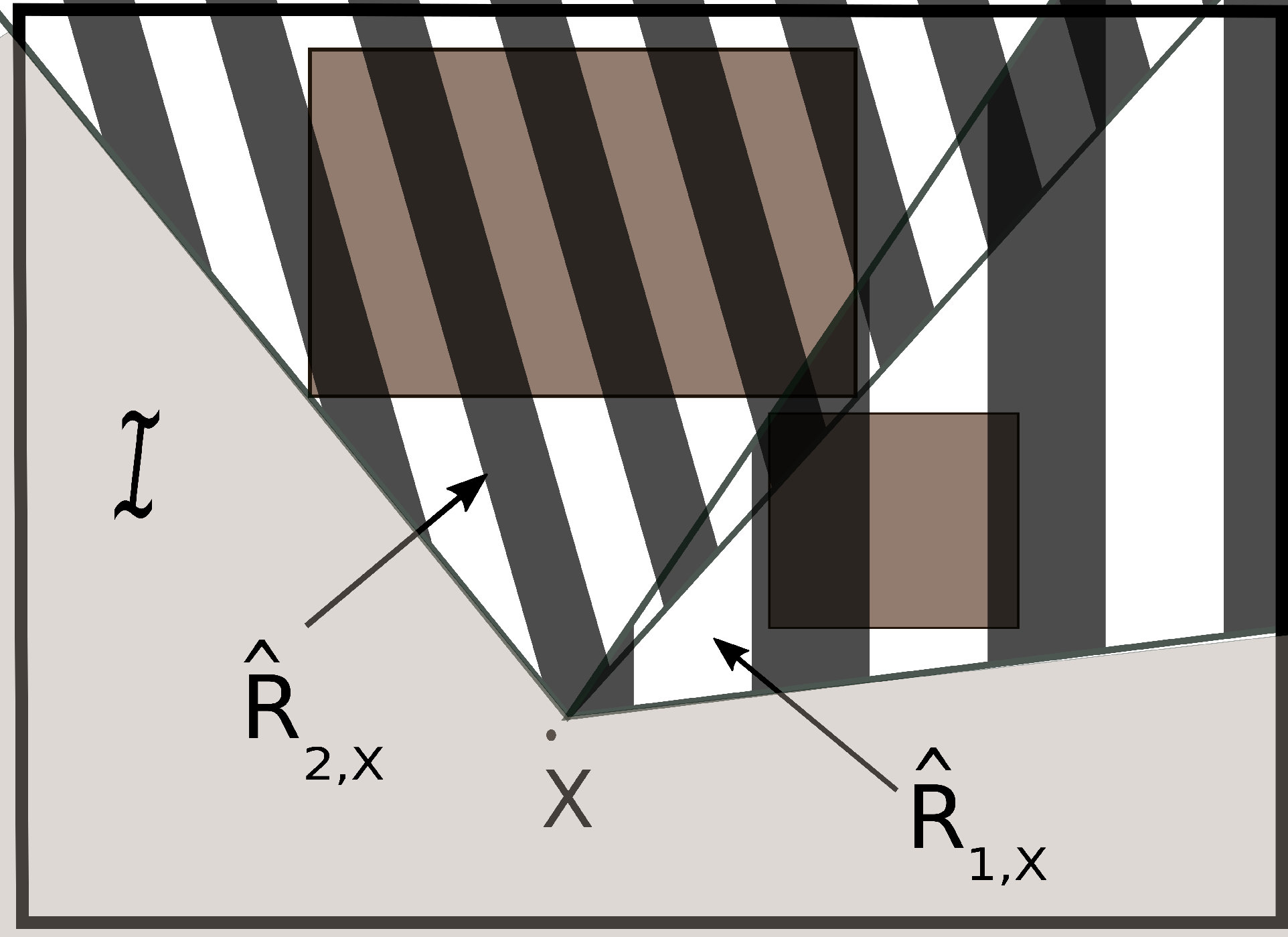}}
\caption{$\widehat{\mathcal{R}_{1,\boldsymbol{X}}}$, $\widehat{\mathcal{R}_{2,\boldsymbol{X}}}$ and $\mathcal{I}$}
\label{fig:r1_and_r2_hats}
\end{subfigure}
\begin{subfigure}{0.23\textwidth}
{\includegraphics[scale=0.08]{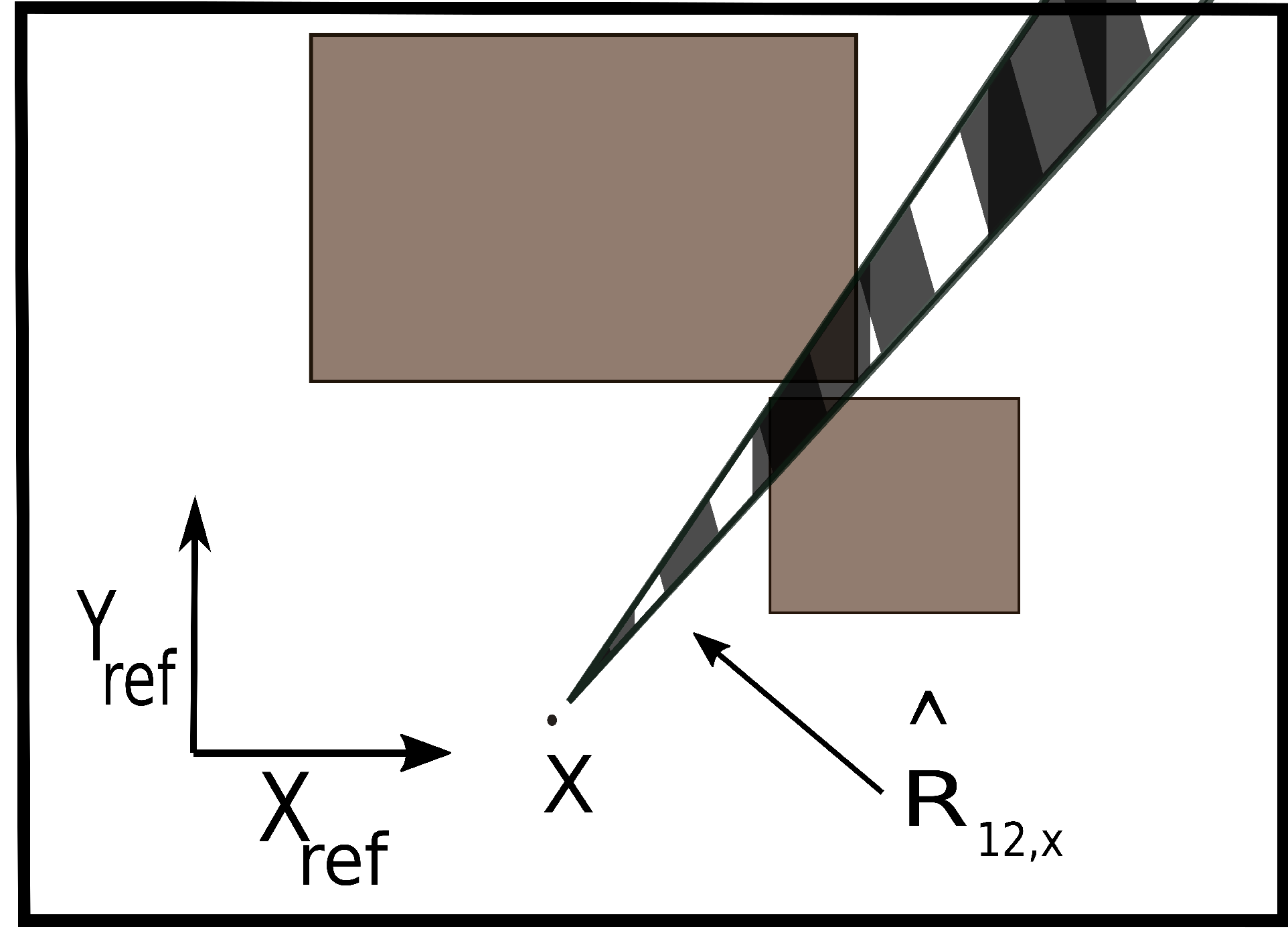}}
\caption{$\widetilde{\mathcal{R}_{12,\boldsymbol{X}}}$}
\label{fig:r12_tilde}
\end{subfigure}
\begin{subfigure}{0.23\textwidth}
{\includegraphics[scale=0.08]{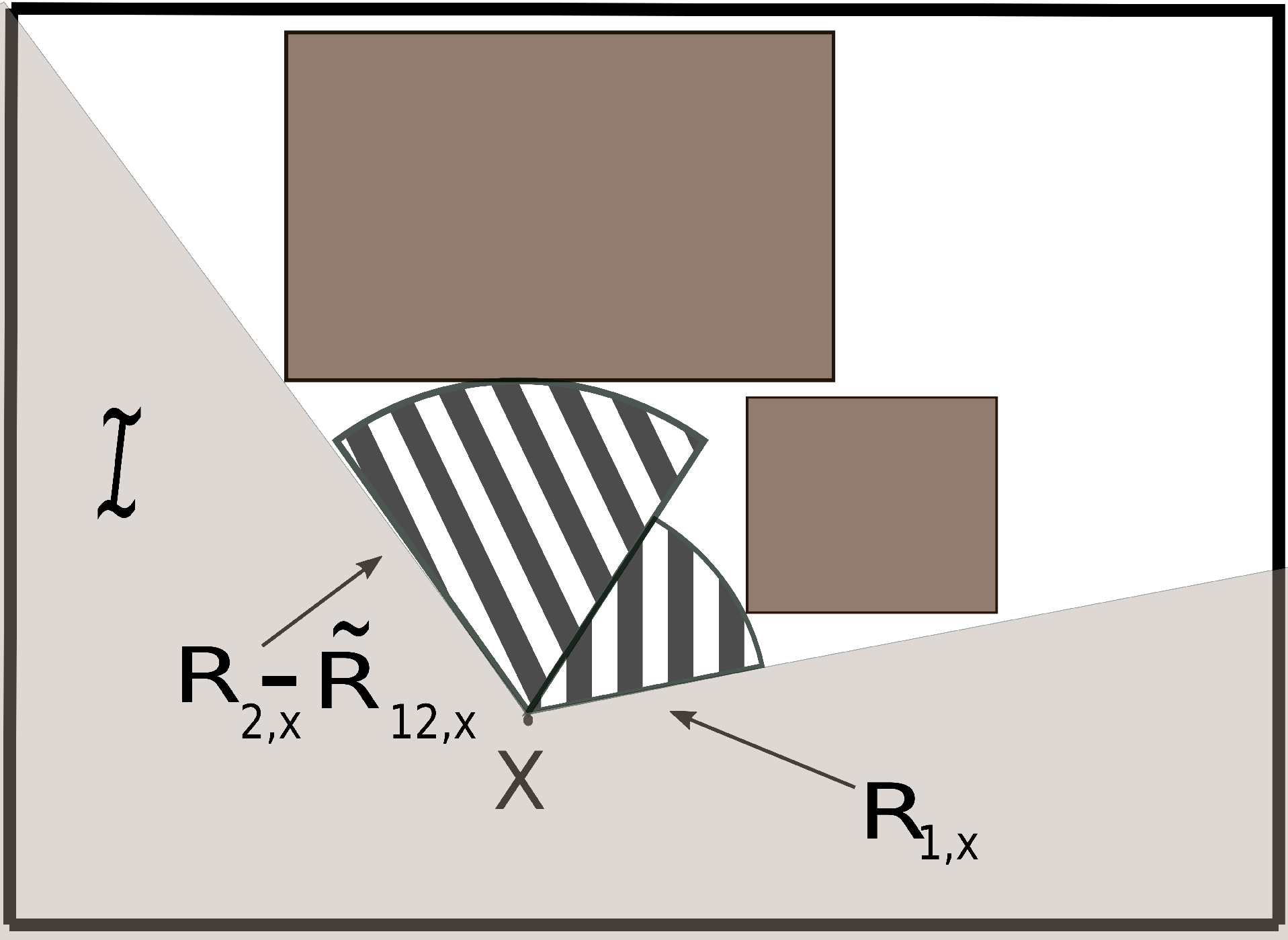}}
\caption{$\mathcal{S}_{\boldsymbol{X}}$}
\label{}
\end{subfigure}
\begin{center}
\captionsetup[subfigure]{justification=centering}
\centering
\end{center}
\caption{An illustration of 2-D projection of 3-D generalized shape $\mathcal{S}_{X}$ about $\boldsymbol{X}$ in an environment having two obstacles.}% {(Here, we show the two dimensional projection of the 3D environment which consists of two obstacles.)}}
\label{fig:gse_2d_shape}
\end{figure*}
\begin{table}[t]
  \centering
   \scalebox{0.99}{
  \begin{tabular}{|p{1cm}||p{7cm}|}
  \hline
 Notations & Description \\
 \hline
 \hline
  $\mathbb{X}, \boldsymbol{X}$ & 3-D workspace and a point in 3-D workspace, respectively \\ 
  \hline
  $\mathbb{V},\mathbb{E}$ & Vertex and edge set, respectively\\    \hline
%   $\mathbb{E}$ & Edge set \\ 
%   \hline
  $m_\text{loc}$ & Number of obstacles after segmentation of local map in $\mathbb{X}$ \\ 
  \hline 
% $h$ & Number of points in a connected path joining $\boldsymbol{X}_\text{init}$ and $\boldsymbol{X}_\text{goal}$ \\ 
%   \hline 
%  $\boldsymbol{X}$ & Point in 3-D space \\ \hline
 $\boldsymbol{p}$ & Current position of agent \\ \hline
 %$\mathbb{V}$ & Set of vertices \\ \hline
 %$\mathbb{E}$ & Set of edges \\ \hline
 $r_\text{i,X}$ & Minimum distance of $\boldsymbol{X}$ from obstacle $i$ \\ \hline
 $p_\text{i,X}$ & Point on obstacle $i$ at a distance of $r_\text{i,X}$ from $\boldsymbol{X}$ \\ \hline
 $n_\text{i,X}$ & Vector from $\boldsymbol{X}$ to $p_\text{i,X}$  \\ \hline
 $\mathbb{D}_\text{i,X}$ & set of points on obstacle $i$  \\ \hline
 $\mathbb{Q}_{\text{i,X}}$ & points of intersection of plane with normal $\boldsymbol{n}_\text{i,X}$, passing through $\boldsymbol{p}_\text{i,X}$ and the lines joining $\mathbb{D}_\text{i,X}$ and $\mathbf{X}$\\ \hline
%  $\mathbb{Q}_{\text{i,X}}$ & Set of points of intersection of plane with normal $n_\text{i,X}$ and passing through $p_\text{i,X}$  and the lines joining  the points on obstacle $i$ and $\mathbf{X}$\\ \hline
 $l_\text{i,X}$ & Maximum distance of point $\boldsymbol{p_{\text{i,X}}}$ from the points in $\mathbb{Q}_{\text{i,X}}$  \\ \hline
 $\mathcal{S}_{\boldsymbol{X}}$ & 3-D generalized shape about $\boldsymbol{X}$  \\
 \hline
 $\mathcal{R}_{i,X}$ &  Spherical sector with vertex at the sampled point $\boldsymbol{X}$ radius equal to $r_{i,X}$, cone angle equal to $\text{tan}^{-1}({l_{i,X}/r_{i,X}})$ and axis along $\boldsymbol{n_\text{i,X}}$ \\ \hline
 $\widehat{\mathcal{R}_{i,X}}$ & Intersection of $\mathbb{X}$ and the spherical sector with vertex at sampled point $\boldsymbol{X}$, cone angle $\texttt{tan}^{-1}(l_{i,X}/r_{i,X})$, axis along $\boldsymbol{n_\text{i,X}}$ and radius being infinite \\ \hline
 $\widetilde{\mathcal{R}_{ij,X}}$ &  $\widehat{\mathcal{R}_{i,X}}\cap\widehat{\mathcal{R}_{j,X}}$ for $i<j$ \\ \hline
  $C_{\boldsymbol{p}}$ & Sensing region about $\boldsymbol{p}$ (assumed to be a cube with center at $\boldsymbol{p}$ and dimensions $10m\times 10m\times 10m$) \\ \hline
 \end{tabular}
 }
 \caption{Notations used in the paper.}
 \label{Tab:Notation}
 \vspace{-10pt}
\end{table}
\subsection{3-D Environment and 3-D Agent}
Consider a 3-D obstacle-cluttered environment denoted by $\mathbb{X}\subset\mathbb{R}^3$. The environment is assumed to be path-connected, that is, there always exists a collision-free path from any point in the free-space $(\mathbb{X_\text{free}})$ to the goal point $\boldsymbol{X}_{\text{goal}}$ with sufficient clearance from obstacle-space $(\mathbb{X_\text{obs}})$. Thus, $\mathbb{X_\text{free}}=\mathbb{X}\setminus\mathbb{X_\text{obs}}$. 
%The environment is represented as an occupancy grid that can be constructed from sensor data such as laser range finder, stereo cameras or RGB-D sensors. We assume that 
However, the environment is not known \textit{a-priori}. Instead it is assumed that point cloud information about the local environment within the sensing region of on-board sensor of 3-D agent is available with sufficient accuracy for planning of safe trajectory that is dynamically feasible for the 3-D agent. Additionally, we assume that the trajectory planning module of the 3-D agent has access to point cloud data segmented into distinct obstacles in local environment. Several computer vision and geometry processing techniques ~\cite{huang2018recurrent,meng2019vv_segment} perform similar real-time segmentation. The sensing region is assumed to be a cube of dimension $10m\times10m\times10m$ as shown in Fig. \ref{fig:real_traj_1}. We also assume that the 3-D agent can localize itself in the locally available environment information. 

In subsequent discussion we assume the 3-D agent as a point mass. Note that the approach developed in this paper could be easily extended to deal with agent geometry by considering $\mathbb{X_\text{safe}}$ as the available space for collision-free paths instead of $\mathbb{X_\text{free}}$, where $\mathbb{X_\text{safe}}=\mathbb{X_\text{free}}\setminus\delta\mathbb{X_\text{obs}}$, where $\delta\mathbb{X_\text{obs}}$ encapsulates the 3-D agent’s geometric shape.
\subsection{\label{sec:problem_formulation_subsection}Problem Formulation}
%Let the obstacle workspace be $\mathbb{X_\text{obs}}\subset\mathbb{X}$. Hence, the obstacle-free space for the agent is given by % Therefore the free region in which the agent can move freely is given by 
%$\mathbb{X_\text{free}}=\mathbb{X}\setminus\mathbb{X_\text{obs}}$. %Our goal is to generate dynamically feasible trajectories for an agent at each timestep, based on the current agent position and the local depth information.
% {\color{red} The following are the benefit, we can  probably use it in the introduction. In problem formulation we need to add about the how you formulate the problem ans solve. Like compute a free space, sample etc. can you write a para that way?-OK how i am computing shape etc is given after assumptions on the agent in section 'Algorithm'. You can shift the benefits to Intro}
%$\textbf{Problem:}$ 

Given initial and goal states $\boldsymbol{X}_\text{init},\boldsymbol{X}_\text{goal}\in \mathbb{X}_\text{free}$, respectively, the task is to plan a time-parametrized trajectory $\sigma(t):[t_0,t_f]\longrightarrow\mathbb{R}^3$ based on the locally sensed environment information, until the agent reaches $\boldsymbol{X}_\text{goal}$ starting form $\boldsymbol{X}_\text{init}$, where $t_0$ is the initial time and $t_f$ is the time at which the agent reaches $\boldsymbol{X}_\text{goal}$. Generated trajectory also needs to ensure that the total snap incurred $c(\sigma(t))$ by the 3-D agent is minimized subject to constraints on maximum magnitudes of velocity and acceleration of the agent. Snap minimization takes care of the smoothening of trajectory to make it dynamically feasible to follow for any realistic vehicle \cite{upenn}. The %corresponding mathematical optimization 
problem can be expressed as:
\begin{align}
&\underset{{{\sigma(t)}}}\minimize \;\;  c({\sigma(t)})\nonumber\\ 
& \text{subject to,}\quad{\sigma(t_0)}\;=\;\boldsymbol{X_{\text{init}}};\quad{\sigma(t_f)}\;=\;\boldsymbol{X_{\text{goal}}},\nonumber\\ 
& \|\dot{\sigma}(t)\|\leq {v}_\text{M};\|\ddot{\sigma}(t)\|\leq {a}_\text{M}, {\sigma(t)}\;\in\;\mathbb{X}_{\text{free}};\forall t\in[t_0,t_f)
\label{eq:optimal_motion_planning_problem}
\end{align}
% where 
% The cost function $c(\sigma(t))$ represents the total snap for the generated trajectory. 

Due to the availability of local environment information only, a systematic re-planning mechanism also needs to be devised for the above trajectory planning.

%%%%%%%%%%%%%%%%%%%%%%%%%%%%%%%%%%%%%%%%%%%%%%%%%
\subsection{\label{subsec:generalized_shape_gse}Generalized Shape: 2-D GSE and 3-D GSE}
The overall idea of the GSE algorithm in 2-D and 3-D environments is described in ~\cite{zinage}, \cite{zinage_3d_gse_journal}. They present an offline method to generate collision-free feasible paths for an agent in \textit{a-priori} known 2-D and 3-D environments, respectively. Expansion of generated graph over generalized shape, an efficient approximator of safe region, was instrumental for computational advantage of the GSE and 3D-GSE algorithms. For details on the 3D-GSE algorithm, refer to~\cite{zinage_3d_gse_journal}. In the 3D online GSE (3D-OGSE) algorithm presented in this paper, we use the same notion of generalized shape, but present a scheme to extend the 3D-GSE to perform computations in an online manner for unknown environments that is sensed locally only. Consider an agent at a point $\boldsymbol{X}$ in the 3-D environment. Let there be $m_\text{loc}$ obstacles in the workspace. 
For each obstacle $i\in\{1,2,\dots,m_\text{loc}\}$ in the local map, we now have the parameters $r_{\text{i,X}}$, $l_{\text{i,X}}$, and $\boldsymbol{n_\text{i,X}}${ (refer to Table \ref{Tab:Notation})} computed about $\boldsymbol{X}$. Without loss of generality we assume that 
$r_{\text{1,X}}$ $ \leq$ $ r_{\text{2,X}}\dots\leq r_{\text{m}_\text{loc},\text{X}}$. 
The 3-D generalized shape about $\boldsymbol{X}$ is represented as,
\begin{multline}
\mathcal{S}_{\boldsymbol{X}} =\mathcal{R}_{1,X}  \cup (\mathcal{R}_{2,X}-\widetilde{\mathcal{R}_{12,X}}) \cup
(\mathcal{R}_{3,X}-(\widetilde{\mathcal{R}_{23,X}}+\widetilde{\mathcal{R}_{13,X}}))\\
\cup \dots \cup (\mathcal{R}_{m_\text{loc},X}-\cup_{j=1}^{j=i-1}\widetilde{\mathcal{R}_{jm_\text{loc},X}}) \cup \mathcal{I},
\label{eq:shape_set}
\end{multline}
where, the set $\mathcal{I}$ is given as, $\mathcal{I}\subset\mathbb{X}|\mathcal{I}\cap\{\cup_{i=1}^{i=m_\text{loc}}\widehat{\mathcal{R}_{i,X}}\}=\emptyset$. Fig. \ref{fig:gse_2d_shape} shows an illustration of 2-D projection of $\mathcal{S}_{\boldsymbol{X}}$. % about $\boldsymbol{X}$.

\section{\label{sec:3d_ogse_motion_planning}3-D Online Generalized Shape Expansion (3D-OGSE) Algorithm}%: Online trajectory generation}\label{sec:3d_ogse_motion_planning}
Now, we present 3D-OGSE for online %safe and feasible 
trajectory planning having the following steps at $k^{\text{th}}$ planning iteration.
\subsection{\label{subsection:gse_collision_free_path}Generating a Collision-free Path}
The $k^{\text{th}}$ planning iteration is initiated at point $\boldsymbol{p}_{k,0}$, which is $\boldsymbol{X}_{\text{init}}$ for $k=1$ and $\boldsymbol{p}_{k-1,f}$, otherwise. Information about $m_{\text{loc}}$ number of obstacles in the sensing region ($C_{\boldsymbol{p}_{k,0}}$) about $\boldsymbol{p}_{k,0}$ is first obtained from local point cloud data. Because of no information about obstacles outside $C_{\boldsymbol{p}_{k,0}}$, we treat unexplored $\mathbb{X}\setminus C_{\boldsymbol{p}_{k,0}}$ as obstacle-free. With this set-up, a collision-free path from $\boldsymbol{p}_{k,0}$ to $\boldsymbol{X}_{\text{goal}}$ is then generated using the notion of 3-D generalized shape given in \cite{zinage_3d_gse_journal}. Once feasible paths are obtained, shortest path among them is found using Dijkstra's algorithm. This is given in Lines 5 to 7 of Algorithm \ref{alg:3d_online_gse}. The pseudo code of $\texttt{3D-GSE}(\boldsymbol{p}_{k,0},\boldsymbol{X}_{\text{goal}},m_{\text{loc}})$ (Line 5 of Algorithm \ref{alg:3d_online_gse}) for the generation of feasible path is given in Algorithm \ref{alg:3d_gse}. Note that the use of generalized shape facilitates fast path planning. Let the finally obtained path from $\boldsymbol{p}_{k,0}$ to $\boldsymbol{X}_\text{goal}$ generated consist of directed edges through set of vertices $\{\boldsymbol{X}_1,\boldsymbol{X}_2\dots\boldsymbol{X}_{h_k}\}$, where $h_k\in\mathbb{N}$. An overall graph $G$ is updated with the graph generated at $k^{\text{th}}$ planning iteration in Line 6 of Algorithm \ref{alg:3d_online_gse}.

\begin{algorithm}[]
\caption{3D Online GSE Algorithm (3D-OGSE)}
\begin{algorithmic}[1]
\State $k\gets 1\;\;,\mathbb{E}\gets\emptyset\;\;,G_k\gets\emptyset,\;\;\boldsymbol{p}\gets\boldsymbol{p}_{1,0}\gets\boldsymbol{X}_{\text{init}}$
%\While{\|\boldsymbol{p}-\boldsymbol{X_\text{goal}}\|\leq\epsilon}
\While {$\|\boldsymbol{p}-\boldsymbol{X}_\text{goal}\|$$\geq\epsilon$}
\State $\boldsymbol{p}_{k,0}\gets\boldsymbol{p},\;\;\mathbb{V}\gets\{\boldsymbol{p}_{k,0},\boldsymbol{X_\text{goal}}\}\;j\gets 1$
\State $m_\text{loc}\gets \texttt{SegmentLocalMap}(\textbf{p})$ // local obstacles
%\State $G_{\boldsymbol{p}}(\boldsymbol{P})\gets \texttt{GSE-Shape}(\boldsymbol{p},m_\text{loc})$
% \State $\mathbb{V}\gets\{\boldsymbol{p}_{k,0},\boldsymbol{X_\text{goal}}\}$
%\State $G_{\boldsymbol{X_\text{goal}}}(\boldsymbol{P})\gets \texttt{GSE-Shape}(\boldsymbol{P},\boldsymbol{X_\text{goal}},m_\text{loc})$
%\State $\mathbb{V}\gets \mathbb{V}\cup\{\boldsymbol{X_\text{goal}}\}$
%\State $\mathbb{E}\gets\emptyset$
%\State $G\gets(\mathbb{V},\mathbb{E})$
% \While {No connected graph is generated from $\boldsymbol{p}_{k,0}$ to $\boldsymbol{X_\text{goal}}$}
% \State $\boldsymbol{X_{\text{rand}}}\gets \texttt{SamplePoint}$
% \State $\boldsymbol{X_\text{nearest}}\gets\texttt{Nearest}(\mathbb{V},\boldsymbol{X_\text{rand}})$
% \State $\boldsymbol{X}_\text{new}\gets \texttt{Steer}(\boldsymbol{X_\text{rand}},\boldsymbol{X_\text{nearest}})$ %// computes a point in generalized shape
% %\State $G_{\boldsymbol{X}_\text{new}}(\boldsymbol{P})\gets \texttt{GSE-Shape}(\boldsymbol{P},\boldsymbol{X}_\text{new},m_\text{loc})$
% \State $\mathbb{V}\gets\mathbb{V}\cup\{\boldsymbol{X}_\text{new}\}$
% \State $\mathbb{E}\gets\mathbb{E}\cup\{(\boldsymbol{X_\text{nearest}},\boldsymbol{X}_\text{new})\}$
% \State $\mathbb{X_{\text{near}}} \gets \texttt{NearIntersectedShapes}(G,\boldsymbol{X_{\text{new}}})$
% \For{$\boldsymbol{X_{\text{n}}}\in \mathbb{X_{\text{near}}}$}
% \State $\mathbb{E}\gets \mathbb{E} \cup \{(\boldsymbol{X_{\text{n}}},\boldsymbol{X_{\text{new}}})\}$
% \EndFor
% \EndWhile
\State $(\mathbb{V},\mathbb{E})\gets \texttt{3D-GSE}(\boldsymbol{p}_{k,0},\boldsymbol{X}_{\text{goal}},m_{\text{loc}})$
\State $G\gets(\mathbb{V},\mathbb{E})\cup G_k$
\State $\texttt{Path}\gets \texttt{djikstra}(\boldsymbol{p}_{k,0},\boldsymbol{X_\text{goal}},G)$
\State $\sigma^k\gets \texttt{TrajOpt}(\texttt{Path},\texttt{Shapes})$
\While{$\texttt{Trigger}(\boldsymbol{p}_{k,0},C_{\boldsymbol{p}_{k,0}},\mathcal{S}_{\boldsymbol{p}_{k,0}},j)=0$}
\State $\boldsymbol{p}\gets \sigma^k(j\Delta t)$        //update agent position
\State $j\gets j+1$
\EndWhile
\State $\boldsymbol{p}_{k,f}\gets\boldsymbol{p}$
\State $GP_k\gets \texttt{Prune}(G,C_{\boldsymbol{p}_{k,0}},\mathcal{S}_{\boldsymbol{p}_{k,0}})$%// generate trajectory
\State $\mathbb{V}\gets \emptyset,\;\mathbb{E}\gets \emptyset$      // clear vertex and edge Set
% \State $\mathbb{E}\gets \emptyset$ // clear edge Set
\State $G_k\gets G_k\cup GP_k$
% \State $j\gets 1$
% \While{$\text{Trigger($\boldsymbol{p}$)=0}$}
% % \State $\boldsymbol{p}\gets \boldsymbol{p}+\frac{\frac{d\sigma(t_0)}{dt}}{\|\frac{d\sigma(t_0)}{dt}\|}\Delta$        //update agent position
% \State $\boldsymbol{p}\gets \sigma_p(j\Delta t)$        //update agent position
% \State $j\gets j+1$
% \EndWhile
% \State $\boldsymbol{p}\gets \boldsymbol{p}+\frac{\frac{d\sigma(t_0)}{dt}}{\|\frac{d\sigma(t_0)}{dt}\|}\Delta$        //update agent position
\State $k\gets k+1$
\EndWhile
\end{algorithmic}
\label{alg:3d_online_gse}
\end{algorithm}

\begin{algorithm}[t]
%\caption{Replanning triggering algorithm \texttt{Trigger}(\boldsymbol{p})}
\caption{$\texttt{3D-GSE}(\boldsymbol{p}_{k,0},\boldsymbol{X}_{\text{goal}},m_{\text{loc}})$}
\begin{algorithmic}[1]
% \State $\texttt{Trajectory}\;\; \sigma(t)\gets \texttt{TrajOpt}(\texttt{Path},\texttt{Shapes})$ // generate trajectory
% \State $G_{\boldsymbol{p}}(\boldsymbol{P})\gets \texttt{GSE-Shape}(\boldsymbol{p},m_\text{loc})$
\State $\mathbb{V}\gets\{\boldsymbol{p}_{k,0},\boldsymbol{X_\text{goal}}\}$
% \State $G_{\boldsymbol{X_\text{goal}}}(\boldsymbol{P})\gets \texttt{GSE-Shape}(\boldsymbol{P},\boldsymbol{X_\text{goal}},m_\text{loc})$
% \State $\mathbb{V}\gets \mathbb{V}\cup\{\boldsymbol{X_\text{goal}}\}$
% \State $\mathbb{E}\gets\emptyset$
% \State $G\gets(\mathbb{V},\mathbb{E})$
\While {No connected graph is generated from $\boldsymbol{p}_{k,0}$ to $\boldsymbol{X_\text{goal}}$}
\State $\boldsymbol{X_{\text{rand}}}\gets \texttt{SamplePoint}$
\State $\boldsymbol{X_\text{nearest}}\gets\texttt{Nearest}(\mathbb{V},\boldsymbol{X_\text{rand}})$
\State $\boldsymbol{X}_\text{new}\gets \texttt{Steer}(\boldsymbol{X_\text{rand}},\boldsymbol{X_\text{nearest}})$ %// computes a point in generalized shape
%\State $G_{\boldsymbol{X}_\text{new}}(\boldsymbol{P})\gets \texttt{GSE-Shape}(\boldsymbol{P},\boldsymbol{X}_\text{new},m_\text{loc})$
\State $\mathbb{V}\gets\mathbb{V}\cup\{\boldsymbol{X}_\text{new}\}$
\State $\mathbb{E}\gets\mathbb{E}\cup\{(\boldsymbol{X_\text{nearest}},\boldsymbol{X}_\text{new})\}$
\State $\mathbb{X_{\text{near}}} \gets \texttt{NearIntersectedShapes}(G,\boldsymbol{X_{\text{new}}})$
\For{$\boldsymbol{X_{\text{n}}}\in \mathbb{X_{\text{near}}}$}
\State $\mathbb{E}\gets \mathbb{E} \cup \{(\boldsymbol{X_{\text{n}}},\boldsymbol{X_{\text{new}}})\}$
\EndFor
\EndWhile
\end{algorithmic}
\label{alg:3d_gse}
\end{algorithm}

% in  $\boldsymbol{X}_{\text{init}}$ and $\boldsymbol{p}_{k,0}$

%However the major difference is that instead of using the entire obstacle workspace to generate a collision-free path, we use the local point cloud data and treat the unexplored workspace as obstacle-free. This reduces the on-board computation time. Let the feasible connected path from the current position $\boldsymbol{p}$ to $\boldsymbol{X}_\text{goal}$ generated at the $k^{\text{th}}$ planning iteration consist of directed edges through set of vertices $\{\boldsymbol{X}_1,\boldsymbol{X}_2\dots\boldsymbol{X}_{h_k}\}$, where $h_k\in\mathbb{N}$.
%%%%%%%%%%%%%%%%%%%%%%%%%%%%%%%%%%%%%%%%%%%
\subsection{Trajectory Optimization}\label{sec:trajopt}
%%%%%%%%%%%%%%%%%%%%%%%%%%%%%%%%%%%%%%%%%%%%%
The collision-free path from $\boldsymbol{p}_{k,0}$ to $\boldsymbol{X}_\text{goal}$ obtained in Section \ref{subsection:gse_collision_free_path} is piece-wise linear between the set of vertices $\{\boldsymbol{X}_1,\boldsymbol{X}_2\dots\boldsymbol{X}_{h_k}\}$. In order to obtain a dynamically feasible smooth trajectory along this path that could be followed by a realistic vehicle, inspired by \cite{mellinger}, a QP for snap minimization is framed using time-parametrized polynomial in \eqref{optimization_problem} subject to the constraints \eqref{eqn:first_boudary}-\eqref{eqn:velocity_and_acceleration_constraints}. Thus, a smooth and locally collision-free trajectory $\sigma^k$ is obtained (see the function $\texttt{TrajOpt}$ in Line 8 of Algorithm \ref{alg:3d_online_gse}).
%and is inefficient for agent motion because the agents  have to stop at each waypoint in order to track the path. 
%The function $\texttt{TrajOpt}$ in Line 8 of Algorithm \ref{alg:3d_online_gse} computes the trajectory $\sigma^k$ at every planning iteration $k$ by solving the optimization problem \eqref{optimization_problem}. In this section, we present our trajectory optimization algorithm based on polynomial fitting, which transforms the piece-wise linear path into a smooth trajectory. We generate dynamically feasible trajectories by minimizing the snap similar to~\cite{mellinger}. 
The snap minimization problem then becomes:
\begin{subequations}
\begin{align}
     & \underset{\sigma^k(t)}\argmin\quad J=\sum_{i=1}^{l_k}\int_{t_i}^{t_{i+1}}\| \frac{d^4}{dt^4}\sigma^k_i(t)\|^2,\;\;\text{subject to,}\label{optimization_problem}\\
    %   & \text{subject to}\nonumber\\ 
       & {\text{Boundary conditions:}}\;\;\boldsymbol{p}_{1,0}=\boldsymbol{X}_{\text{init}},\;\boldsymbol{p}_{k,0}=\boldsymbol{p}_{k-1,f}\label{eqn:first_boudary}\\
        & \sigma^k(t_{l_k})=\boldsymbol{X}_{\text{goal}},\;\sigma^k(t_1)=\boldsymbol{p}_{k,0}\\
            & \frac{d}{dt}\sigma^k(t)\big|_{\sigma^k(t_1)=\boldsymbol{p}_{k,0}}= \frac{d}{dt}\sigma^{k-1}(t)|_{\sigma^{k-1}(t)=\boldsymbol{p}_{k-1,f}}\quad\label{eqn:replanning_smoothness constraint}\\
    & \text{Other constraints for all }i=1,2,\dots,l_k:\nonumber\\
     & \frac{d^z}{dt^z}\sigma^k_i(t_i )=\frac{d^z}{dt^z}\sigma^k_{i+1}(t_{i}) \quad z=0,\dots,4\label{eqn:four_derivatives_smoothness}\\
     & \|\frac{d}{dt}\sigma^k_{i}(t)\|\leq {v}_\text{M},\;\|\frac{d^2}{dt^2}\sigma^k_{i}(t)\|\leq {a}_\text{M}, \;\;  \sigma^k_i(t)\in \mathcal{S}_{\boldsymbol{X}_i}\label{eqn:velocity_and_acceleration_constraints}
\end{align}
\end{subequations}
%and the trajectories are parametrized using suitable basis functions in $\mathbb{R}^3$, which are piece-wise polynomial functions as in \cite{mellinger} and is given as follows: 
Thus, the generated trajectory is a collection of segments as $\sigma^k(t)=\{\sigma^k_i(t)=\sum\limits_{\ell=1}^{n} \sigma^k_{i\ell}(t)^\ell\;\text{for}\;{t_i\leq t< t_{(i+1)}}\}_{i=1}^{l_k}$. For each $i=\{1,\dots,l_k\}$,   $\sigma^k_i$ connects $\boldsymbol{X}_i$ to $\boldsymbol{X}_{i+1}$ and belongs to $\mathcal{S}_{\boldsymbol{X}_i}$ (see \eqref{eqn:velocity_and_acceleration_constraints}) ensuring that $\sigma^k_i$ lies in $\mathbb{X}_{\text{free}}$. As a consequence, inside $k^{\text{th}}$ planning horizon, each points on $\sigma^k$ also lies in $\mathbb{X}_{\text{free}}$. It also facilitates in fast QP solving by restricting the search-space to $\mathcal{S}_{\boldsymbol{X}_i}$. Eq. \eqref{eqn:four_derivatives_smoothness} and velocity and acceleration constraints in \eqref{eqn:velocity_and_acceleration_constraints} provide the dynamical feasibility to $\sigma^k$. Moreover, Eqs. \eqref{eqn:first_boudary} - \eqref{eqn:replanning_smoothness constraint} renders smoothness to trajectory even at the transition point from $(k-1)^{\text{th}}$ to $k^{\text{th}}$ planning iteration. 

\begin{algorithm}[]
%\caption{Replanning triggering algorithm \texttt{Trigger}(\boldsymbol{p})}
\caption{$\texttt{Trigger}(\boldsymbol{p}_{k,0},C_{\boldsymbol{p}_{k,0}},\mathcal{S}_{\boldsymbol{p}_{k,0}},j)$}
\begin{algorithmic}[1]
% \State $\texttt{Trajectory}\;\; \sigma(t)\gets \texttt{TrajOpt}(\texttt{Path},\texttt{Shapes})$ // generate trajectory
\State $\boldsymbol{p}'\gets \sigma^k(j\Delta t)$ 
\If {$\boldsymbol{p}'\in C_{\boldsymbol{p}_{k,0}}\quad\text{and}\quad \boldsymbol{p}'\in \mathcal{S}_{\boldsymbol{p}_{k,0}}$}
%\If {$\texttt{GSE-Shape}(\boldsymbol{p}',\boldsymbol{p},m_\text{loc})=0$}
%\If {{{$\texttt{GSE-Shape}(\boldsymbol{p},\boldsymbol{p}',m_\text{loc})=0$}}}
\State $\texttt{Trigger}(\boldsymbol{p}_{k,0},C_{\boldsymbol{p}_{k,0}},\mathcal{S}_{\boldsymbol{p}_{k,0}},j)=0$
%\EndIf
\Else
\State $\texttt{Trigger}(\boldsymbol{p}_{k,0},C_{\boldsymbol{p}_{k,0}},\mathcal{S}_{\boldsymbol{p}_{k,0}},j)=1$
\EndIf
\State $\text{return}\quad\texttt{Trigger}(\boldsymbol{p}_{k,0},C_{\boldsymbol{p}_{k,0}},\mathcal{S}_{\boldsymbol{p}_{k,0}},j) $
\end{algorithmic}
\label{alg:trigger_replanning_gse}
\end{algorithm}

% \subsection{Re-planning Process}
% We build a global map from the current position to the goal position assuming that the unknown space is obstacle free. The update rate of getting the point cloud data is always greater than the update rate of the re-planning stage to ensure that the trajectories are collision-free. The trajectories are continuously re-planned until the agent  reaches $\boldsymbol{X}_\text{goal}$. Since our proposed 3D-OGSE planner allows smoothness and continuity even in higher derivatives, we are able to use the full state of the agent, including velocity and acceleration constraints, guaranteeing smooth paths even in changing plans. The only constraint that can be enforced is the static one (all time derivatives are zero). However, in our method we first generate a feasible path from the current position to the goal position assuming the unknown space is obstacle-free and perform snap minimization over non-conservative convex regions. We can therefore guarantee that our generated trajectories are safe. Since our approach allows better control over the end derivatives, we are able to continue planning from the exact current state of the agent, leading to continuous and smooth paths.
\begin{figure}[t!]
\centering
\begin{subfigure}[b]{0.3\textwidth}
{\centerline{\includegraphics[width=0.911\textwidth,height=3.9453cm]{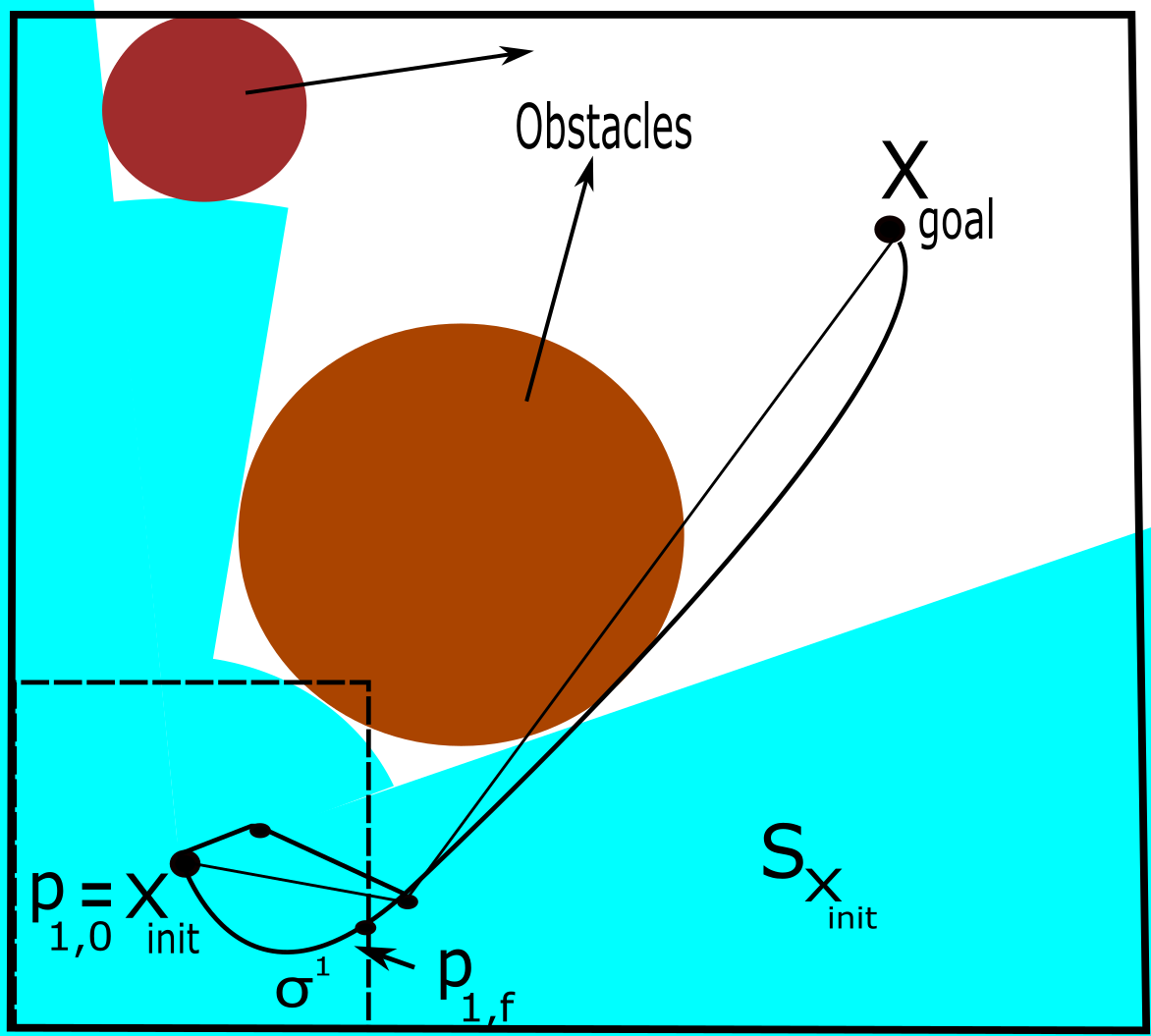}}}
\end{subfigure}
\caption{Illustration of first planning iteration (2-D projection). The area inside the dotted square shows projection of the sensing region $C_{\boldsymbol{X}_{\text{init}}}$ about ${\boldsymbol{X}_{\text{init}}}$, and the blue region shows the generalized shape $\mathcal{S}_{\boldsymbol{X}_{\text{init}}}$ about ${\boldsymbol{X}_{\text{init}}}$. The agent moves along the smooth trajectory $\sigma^1$ until it reaches the boundary of sensing shape about $\boldsymbol{X}_{\text{init}}$ (=$C_{\boldsymbol{X}_{\text{init}}}\cap \mathcal{S}_{\boldsymbol{X}_{\text{init}}}$). }
\label{fig:iteration_gse_sensing_shape}
\vspace{-10pt}
\end{figure}
\subsection{Replanning Trigger for RHP Mechanism}\label{subsec:replanningtrigger}
% We build a global map from the current position to the goal position assuming that the unknown space is obstacle free. The update rate of getting the point cloud data is always greater than the update rate of the re-planning stage to ensure that the trajectories are collision-free.
%We present an online replanning triggering method. The trajectories are not continuously replanned as new information about the local obstacles are received as this may be computationally expensive to do it onboard. Consider a agent at its current position $\boldsymbol{p}$. Let the position of the agent at the start of the planning iteration $k$ be denoted by $\boldsymbol{p}_{k,0}$ Based on the local obstacles, the generated trajectory at planning iteration $k$ is $\sigma^k(t)$.

Sensing shape about a point $\boldsymbol{X}$ is defined as the region $C_{\boldsymbol{X}}\cap\mathcal{S}_{\boldsymbol{X}}$, where $C_{\boldsymbol{X}}$ and $\mathcal{S}_{\boldsymbol{X}}$ are sensing region and generalized shape, respectively, about $\boldsymbol{X}$. Starting from $\boldsymbol{p}_{k,0}$, as the 3-D agent moves along $\sigma^k(t)$, at every time-instant it checks whether the next point on $\sigma^k$ lies inside the sensing shape about $\boldsymbol{p}_{k,0}$. This check helps in reducing computational overhead since $C_{\boldsymbol{p}_{k,0}}$ and $\mathcal{S}_{\boldsymbol{p}_{k,0}}$ are already computed at the beginning of the $k^{th}$ planning iteration. The agent keeps on following $\sigma^k$ if the next point lies inside the sensing shape about $\boldsymbol{p}_{k,0}$. A re-planning is triggered only when the agent reaches some point $\boldsymbol{p}_{k,f}$ on the boundary of $C_{\boldsymbol{p}_{k,0}}\cap\mathcal{{S}}_{\boldsymbol{p}_{k,0}}$. Once a re-planning is triggered the RHP mechanism shifts the planning horizon to $(k+1)^{\text{th}}$ planning iteration (restricted by $C_{\boldsymbol{p}_{k,0}}$) such that $\boldsymbol{p}_{k+1,0}=\boldsymbol{p}_{k,f}$ is satisfied.

The fore-mentioned RHP scheme helps in avoiding re-planning trajectory in every time-point, thus saving much computational time. Clearly, this RHP scheme is easily real-time applicable for stationary obstacles scenario. However, for dynamic obstacles scenario the re-planning rate might increase based on moving obstacles' states in neighborhood of the sensing shape of $\boldsymbol{p}_{k,0}$. Even in that scenario also because of significantly low computational time requirement (see Table II) of the presented 3D-OGSE algorithm, it would not add significantly high computational overhead and thus would be real-time applicable as well. 
\section{\label{sec:analysis_3d_ogse}Analysis of the 3D-OGSE Algorithm}
\subsection{Probabilistic Completeness of the 3D-OGSE Algorithm}
The probabilistic completeness proof for the offline GSE algorithm is given in \cite{probabilistic_completeness_gse}. However in the following proof, the local sensing region is accounted for.

Let the region of intersection of the sensing region $C_{\boldsymbol{X}}$ and generalized shape $\mathcal{S}_{\boldsymbol{X}}$ i.e. $C_{\boldsymbol{X}}\cap\mathcal{S}_{\boldsymbol{X}}$ be termed as the \say{Sensing Shape} about $\boldsymbol{X}$. Consider a sensing restricted visibility function $g: 2^{\mathbb{X}_{\text{free}}}\rightarrow 2^{\mathbb{X}_{\text{free}}}$, defined as follows:
\begin{align} 
   g(x) = \{\boldsymbol{X} \in \mathbb{X}_{\text {free }}\mid C_{\boldsymbol{X}} \cap \mathcal{S}_{\boldsymbol{X}}\cap x \ne \emptyset\}
   \label{eq:visb_fn}
\end{align}
where, $2^{\mathbb{X}_{\text{free}}}$ is the power set of the free-space. Thus, this function maps from the power set of the free-space to itself and provides the set of points that contain any point of the argument in the sensing shape. 

\begin{figure*}[]
\captionsetup[subfigure]{justification=centering}
%\centering
\begin{subfigure}{0.2\textwidth}
{\includegraphics[scale=0.108]{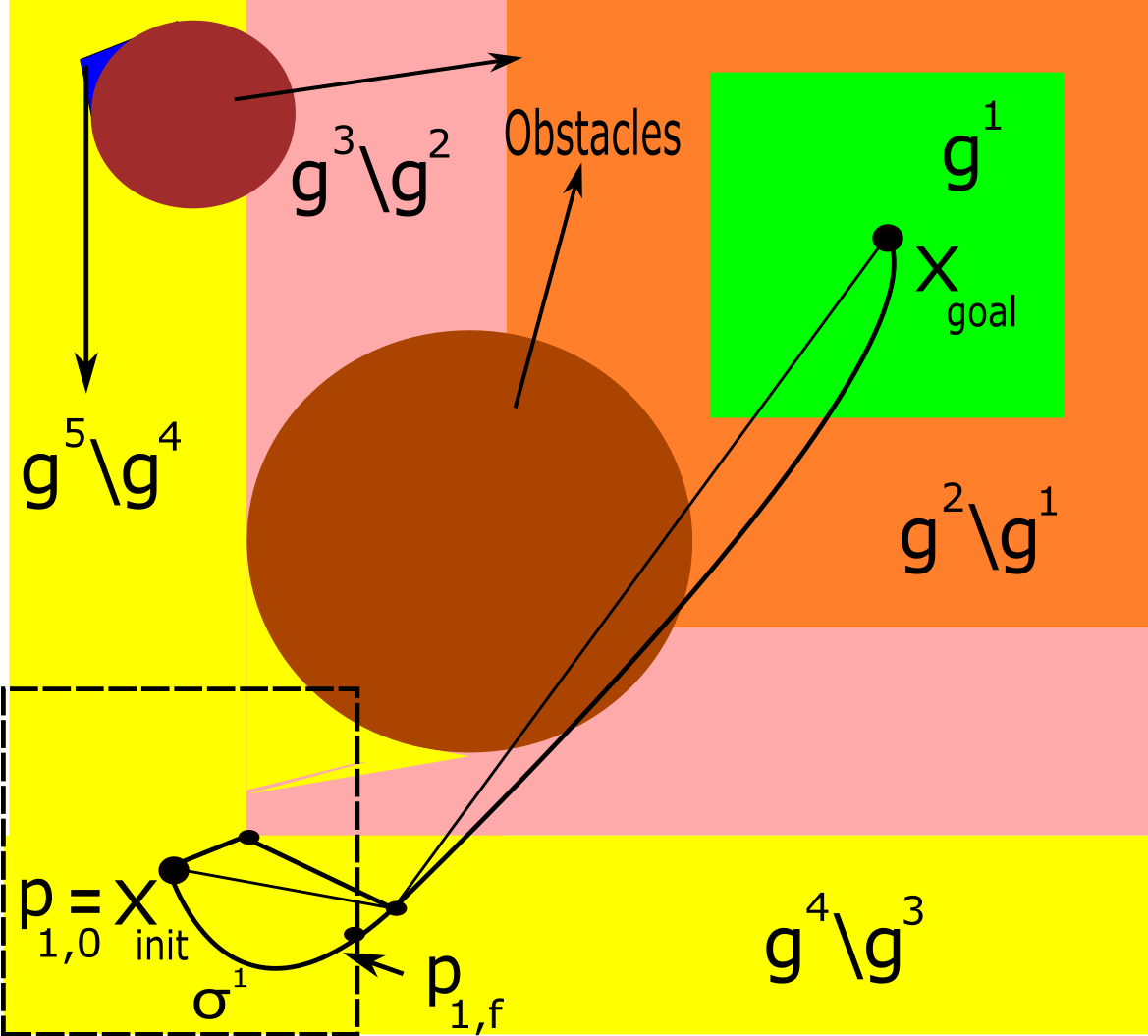}}
\caption{Planning iteration 1 ($k=1$)}
\label{fig:iteration_1}
\end{subfigure}
%\begin{center}
\begin{subfigure}{0.2\textwidth}
\includegraphics[scale=0.108]{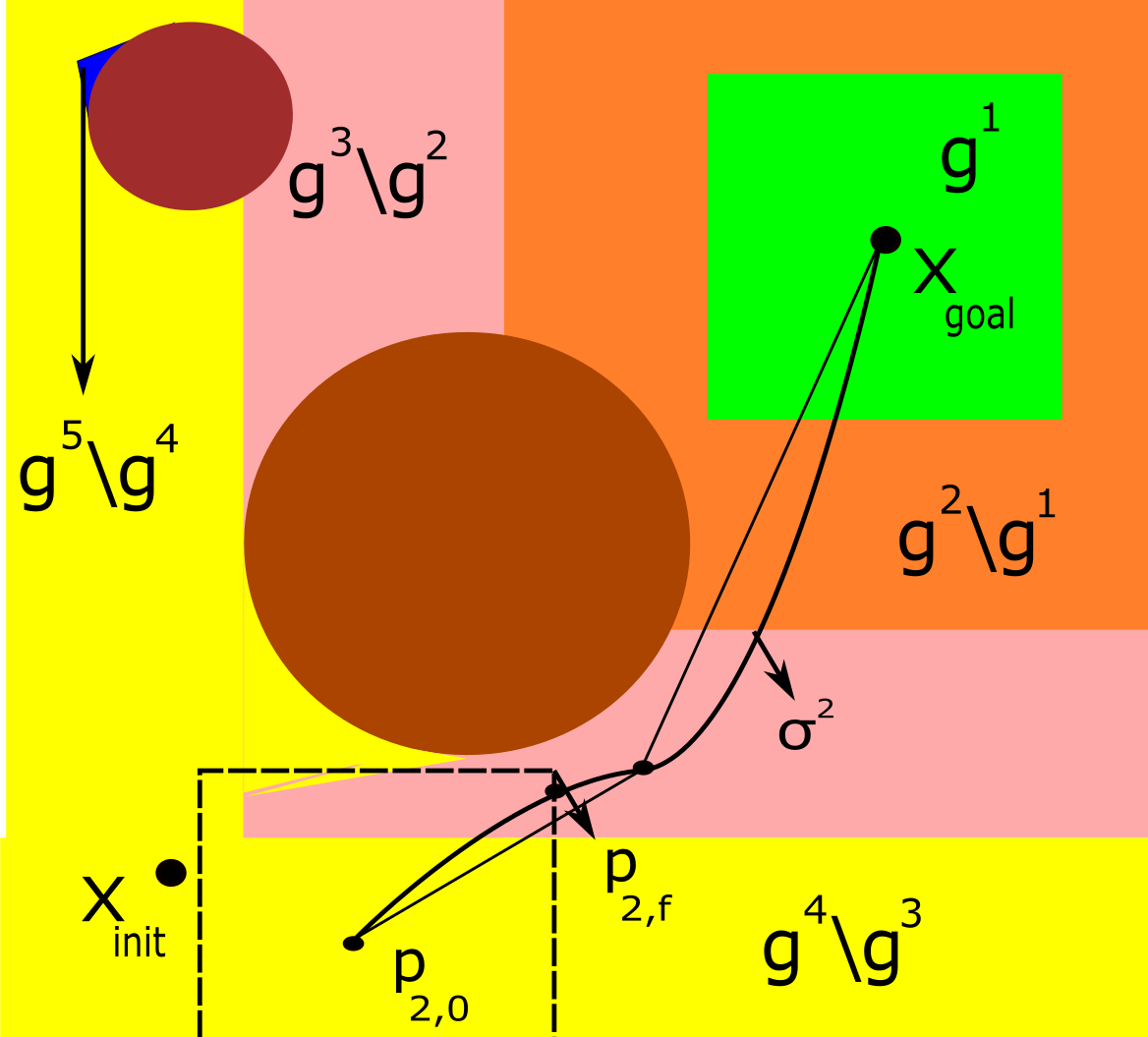}
\caption{Planning iteration 2 ($k=2$)}
\label{fig:iteration_2}
\end{subfigure}
\begin{subfigure}{0.2\textwidth}
\includegraphics[scale=0.108]{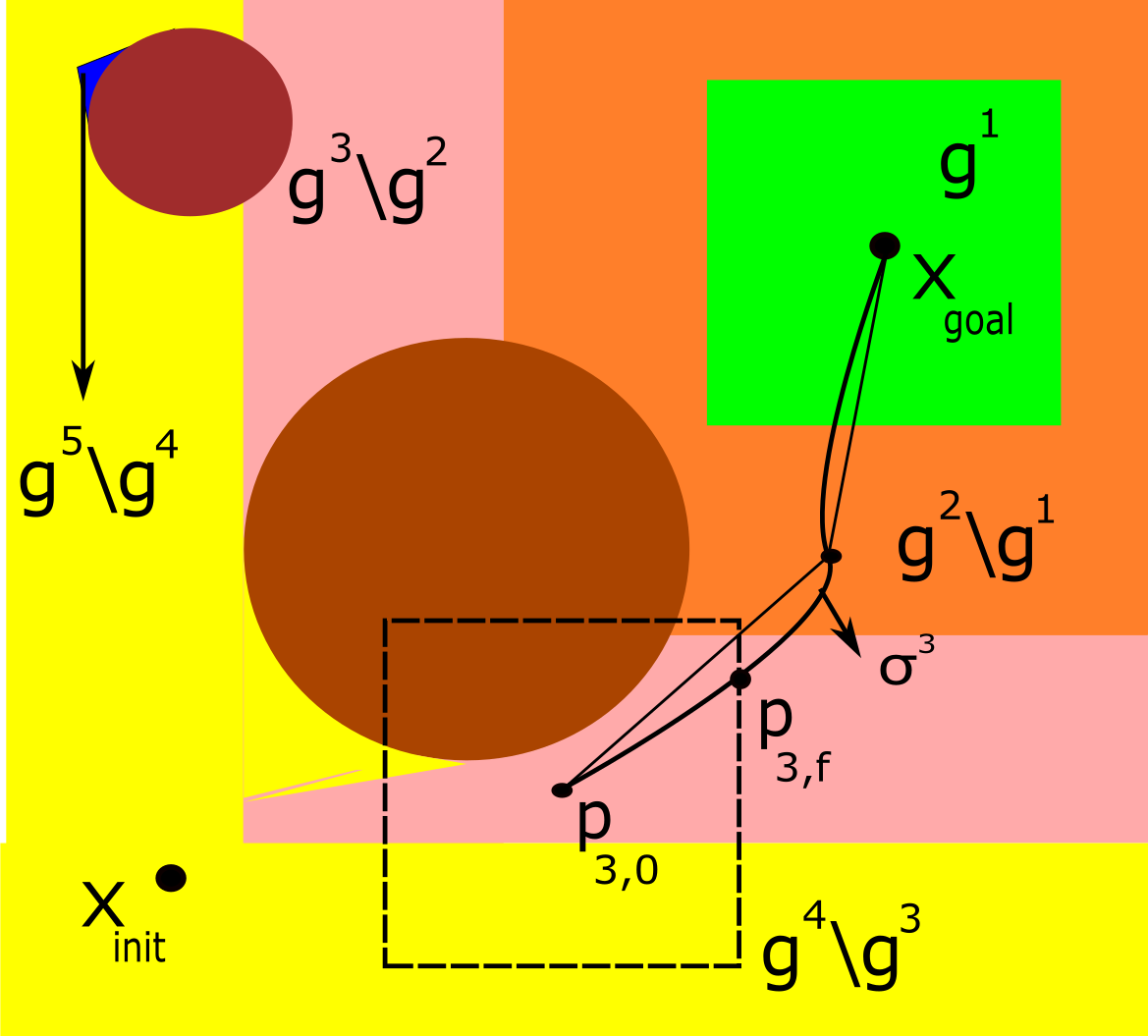}
\caption{Planning iteration 3 ($k=3$)}
\label{fig:iteration_3}
\end{subfigure}
\begin{subfigure}{0.2\textwidth}
\includegraphics[scale=0.108]{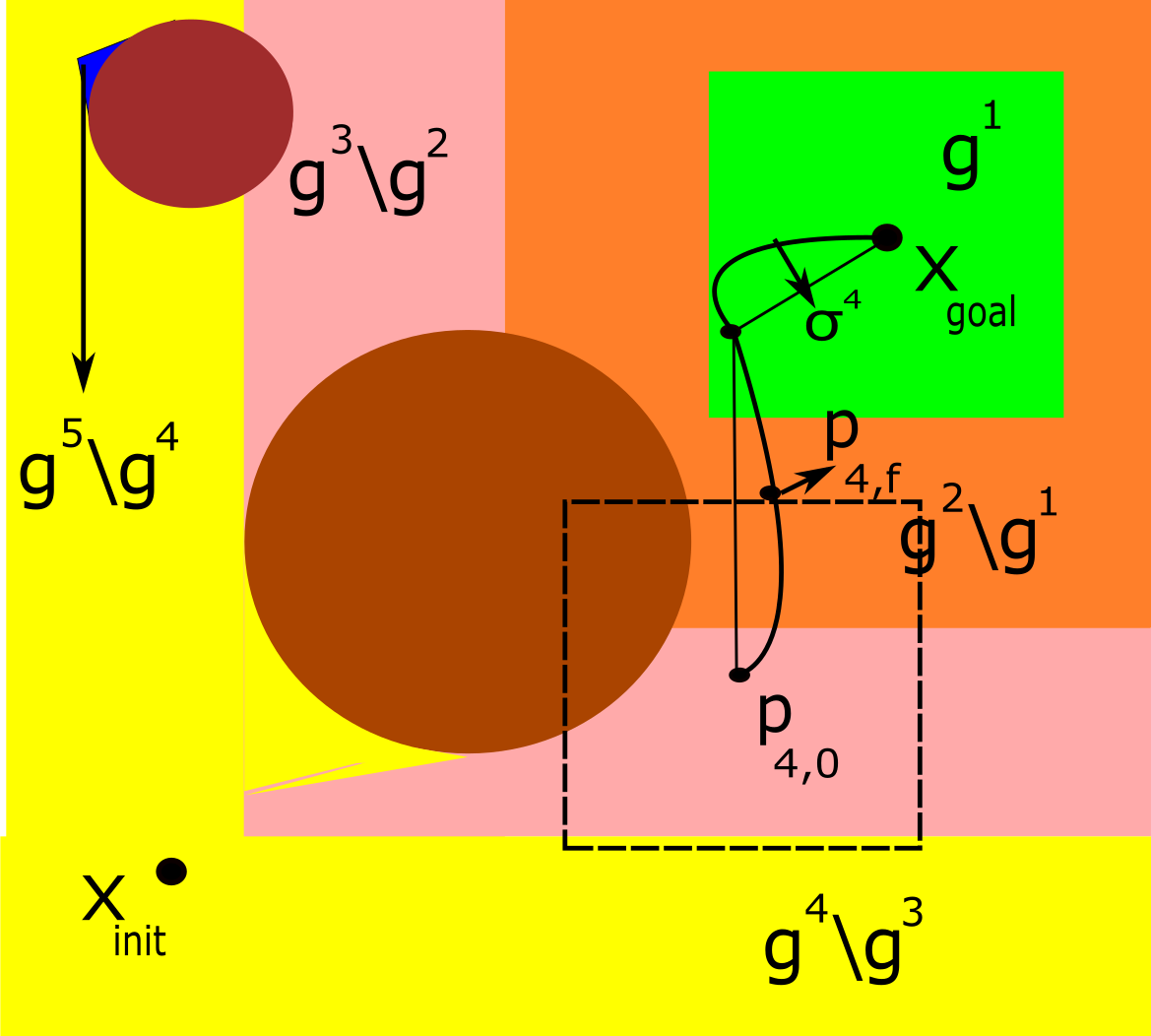}
\caption{Planning iteration 4 ($k=4$)}
\label{fig:iteration_4}
\end{subfigure}
\begin{subfigure}{0.17\textwidth}
\includegraphics[scale=0.108]{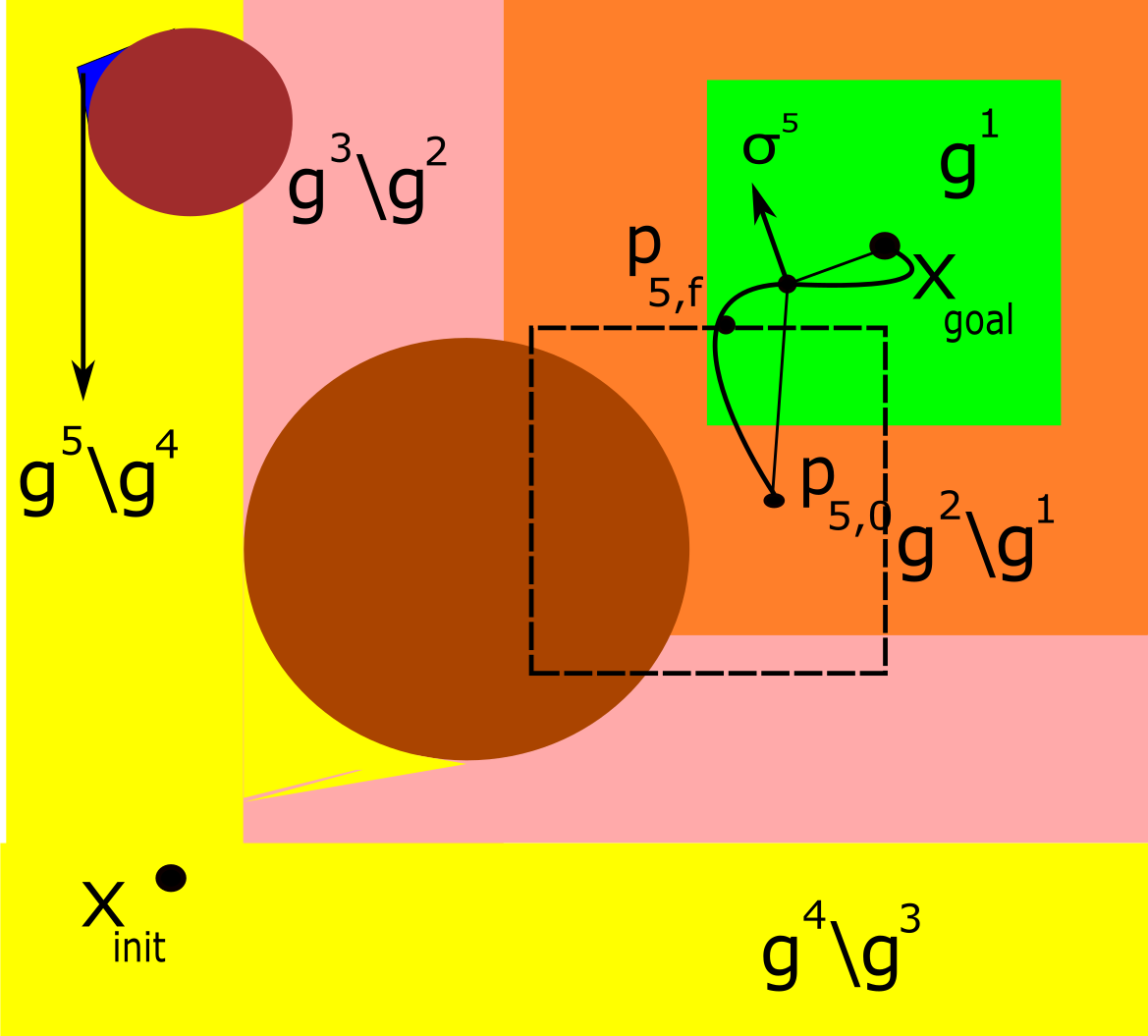}
\caption{Planning iteration 5 ($k=5$)}
\label{fig:iteration_4}
\end{subfigure}
\caption{Illustration of planning iterations of the 3D-OGSE algorithm for dynamically feasible safe trajectory generation (2-D projection). Here, $g^{i}\triangleq g^{(i)}(\boldsymbol{X}_{\text{goal}})\forall i\in\{1,2\dots 5\}$ denotes $i^{\text{th}}$ iterate of sensing-restricted visibility function. (a) In planning iteration 1, the agent starts from $\boldsymbol{p}_{1,0}=\boldsymbol{X}_\text{init}\in g^4$ and generates a locally safe and smooth trajectory $\sigma^1$ and moves along it until it reaches a point $\boldsymbol{p}_{1,f}$ on the boundary of sensing shape about $\boldsymbol{p}_{1,0}$ (=$C_{\boldsymbol{p}_{1,0}}\cap \mathcal{S}_{\boldsymbol{p}_{1,0}}$). %The area inside the dotted lines shows the sensing region about the agent and the agent lies in the sensing restricted visibility region $g^{(4)}(\boldsymbol{X}_{\text{goal}})$. 
(b)-(e) In planning iteration 'k' for $k=2,3,4,5$, the agent starts from $\boldsymbol{p}_{k,0}=\boldsymbol{p}_{k-1,f}$ and generates a locally safe and smooth trajectory $\sigma^1$ and moves along it until it reaches a point $\boldsymbol{p}_{k,f}$ on the boundary of sensing shape about $\boldsymbol{p}_{k,0}$ (=$C_{\boldsymbol{p}_{k,0}}\cap \mathcal{S}_{\boldsymbol{p}_{k,0}}$). During this time, in planning iterations 2, 4 and 5 the agent moves from higher iterate of $g$ to lower iterate of $g$ (from $g^4$ to $g^3$, from $g^3$ to $g^2$, and from $g^2$ to $g^1$, respectively). Finally, at the end of planning iteration 5 the agent reaches $\boldsymbol{p}_{5,f}\in g(\boldsymbol{X}_{\text{goal}})$, that is $\boldsymbol{X}_{\text{goal}}\in C_{\boldsymbol{p}_{1,0}}\cap \mathcal{S}_{\boldsymbol{p}_{1,0}}$ ensuring existence of a straight-line edge connectivity between $\boldsymbol{p}_{5,f}$ and $\boldsymbol{X}_{\text{goal}}$.
%is at planning iteration 2 and position at the start of this iteration is $\boldsymbol{p}_{2,0}$. The agent moves along the trajectory generated at $\boldsymbol{p}_{1,f}=\boldsymbol{p}_{2,0}$, and then triggers replanning when it is just about to reach the boundary of the sensing shape about $\boldsymbol{p}_{2,0}$ (denoted by $\boldsymbol{p}_{2,f}$). The agent still lies in $g^{4}(\boldsymbol{X}_{\text{goal}})$ (c) At the start of the third planning iteration, the agent is at the position $\boldsymbol{p}_{3,0}=\boldsymbol{p}_{2,f}$. The agent again triggers replanning when the agent is at the boundary of the sensing shape generated about $\boldsymbol{p}_{3,0}$. (d) The agent triggers replanning when the agent is at $\boldsymbol{p}_{3,f}=\boldsymbol{p}_{4,0}$. (e) The agent moves along the trajectory generated at $\boldsymbol{p}_{4,f}=\boldsymbol{p}_{5,0}$ and lies in region $g^1$. The agent then triggers replanning at the point $\boldsymbol{p}_{5,f}$. When the agent is $\boldsymbol{p}_{5,f}$, the goal position $\boldsymbol{X}_{\text{goal}}$ now lies in the sensing region of the agent
}
\label{fig:iterations_gse_algorithm}
\end{figure*}

% \begin{figure}[t!]
% \centering
% \begin{subfigure}[b]{0.23\textwidth}
% \fbox{{\includegraphics[width=0.9\textwidth,height=3.4cm]{g_1_ral.eps}}}
% \end{subfigure}
% \begin{subfigure}[b]{0.23\textwidth}
% \fbox{\includegraphics[width=0.9\textwidth,height=3.4cm]{g_2_ral.eps}}
% \end{subfigure}
% \caption{The regions $g^1(\boldsymbol{X}_{\text{goal}})$ and $g^2(\boldsymbol{X}_{\text{goal}})$. With every planning stage $i$ the function $g^i$ grows and for a sufficiently large $i\geq q$ the Lebesgue measure of the function $g^i(\boldsymbol{X}_{\text{goal}})$ equals the entire workspace i.e. $g^{(i)}(\boldsymbol{X}_\text{goal})=\mathbb{X}_\text{free}$}
% \label{traj_1}
% \vspace{-10pt}
% \end{figure}

The main intuition for proposing the following two lemmas is as follows. As the agent moves in the free space to reach the goal position, the union of the sensing shapes of the agent grows in volume and at some iteration this union covers the entire free space and it also contains the goal point $\boldsymbol{X}_{\text{goal}}$. At this stage one can claim that there is a collision-free path from the agent's current position to the goal position.

Let the function $g^{(2)}(\boldsymbol{X}_{\text{goal}})\triangleq g(g(\boldsymbol{X}_{\text{goal}}))$. In the following two lemmas, we prove that the function $\mu({{g}^{(i)}({\boldsymbol{X}_\text{goal}})})$ increases by a positive constant with every $i$ and equals $\mu(\mathbb{X}_{\text{free}})$ for some value of $i\geq q$. Consequently, this would imply that the entire free-space $\mathbb{X}_{\text{free}}$ is also contained in ${{g}^{(i)}({\boldsymbol{X}_\text{goal}})}$ for the corresponding finite value of $i$.

% Let the vertex set  generated by GSE at the end of $n^\text{th}$ iteration excluding the Goal point be denoted by $\widehat{\mathbb{V}}_n$. 
% From the discussion above, it is clear that a key step towards proving probabilistic completeness is showing that  $\widehat{\mathbb{V}}_k\cap{f}^{(u)}({\boldsymbol{X}_\text{goal}})\neq\emptyset$ implies $\widehat{\mathbb{V}}_l\cap{f}^{(v)}({\boldsymbol{X}_\text{goal}})\neq\emptyset$ for $v<u$ and $l\gg k$, with high probability. If this occurs enough times to ensure that $\widehat{\mathbb{V}}_n\cap f({\boldsymbol{X}_\text{goal}})\neq\emptyset$ for some $n$, the algorithm is guaranteed to succeed in terms of probabilistic completeness, since in that case a path would exist between ${\boldsymbol{X}_\text{init}}$ and ${\boldsymbol{X}_\text{goal}}$. 

\begin{lemma}
\textit{For $g$ defined in \eqref{eq:visb_fn}, for all $i$, ${g}^{(i)}({\boldsymbol{X}_\text{goal}})\subseteq {g}^{(i+1)}({\boldsymbol{X}_\text{goal}})$}
\label{lem:f expanding}
\end{lemma}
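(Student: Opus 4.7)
The plan is to prove this by combining two elementary observations about the sensing-restricted visibility function $g$: that it is extensive (every set is contained in its image) and that it is monotone (it preserves set inclusion). Once both are established, a straightforward induction on $i$ gives the claim.

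First I would show the extensive property: for any $x \subseteq \mathbb{X}_{\text{free}}$, one has $x \subseteq g(x)$. The key observation is that for every $\boldsymbol{X} \in x$, the point $\boldsymbol{X}$ itself lies in its own sensing shape, since $\boldsymbol{X}$ is the center of the sensing cube $C_{\boldsymbol{X}}$ and also belongs to $\mathcal{S}_{\boldsymbol{X}}$ by construction of the generalized shape in \eqref{eq:shape_set} (the vertex point of each spherical sector $\mathcal{R}_{i,X}$ is $\boldsymbol{X}$). Hence $\boldsymbol{X} \in C_{\boldsymbol{X}} \cap \mathcal{S}_{\boldsymbol{X}} \cap x$, so this intersection is nonempty, and by \eqref{eq:visb_fn} we conclude $\boldsymbol{X} \in g(x)$.

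Next I would verify the monotonicity of $g$: if $A \subseteq B \subseteq \mathbb{X}_{\text{free}}$, then $g(A) \subseteq g(B)$. Indeed, if $\boldsymbol{X} \in g(A)$ then $C_{\boldsymbol{X}} \cap \mathcal{S}_{\boldsymbol{X}} \cap A \ne \emptyset$, and since $A \subseteq B$, the intersection $C_{\boldsymbol{X}} \cap \mathcal{S}_{\boldsymbol{X}} \cap B$ is also nonempty, giving $\boldsymbol{X} \in g(B)$.

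With these in hand, the lemma follows by induction on $i$. The base case ($i=0$, interpreting $g^{(0)}(\boldsymbol{X}_{\text{goal}}) = \{\boldsymbol{X}_{\text{goal}}\}$) is immediate from the extensive property applied to $x = \{\boldsymbol{X}_{\text{goal}}\}$. For the inductive step, assuming $g^{(i)}(\boldsymbol{X}_{\text{goal}}) \subseteq g^{(i+1)}(\boldsymbol{X}_{\text{goal}})$, applying monotonicity to both sides yields $g^{(i+1)}(\boldsymbol{X}_{\text{goal}}) \subseteq g^{(i+2)}(\boldsymbol{X}_{\text{goal}})$. There is no real obstacle here; the only subtlety worth stating explicitly is the verification that $\boldsymbol{X} \in C_{\boldsymbol{X}} \cap \mathcal{S}_{\boldsymbol{X}}$, which must be drawn directly from the definitions of the sensing cube and the generalized shape so that the extensive property is not merely asserted.
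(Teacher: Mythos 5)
Your proof is correct, and it rests on exactly the same key fact as the paper's: every point $\boldsymbol{X}$ lies in its own sensing shape $C_{\boldsymbol{X}}\cap\mathcal{S}_{\boldsymbol{X}}$ (you justify this slightly more carefully than the paper does, by noting $\boldsymbol{X}$ is the center of the cube $C_{\boldsymbol{X}}$ and the common vertex of the sectors building $\mathcal{S}_{\boldsymbol{X}}$). Where you differ is in the packaging: the paper applies the extensive property directly to the set $x=g^{(i)}(\boldsymbol{X}_{\text{goal}})$, so that $g^{(i)}(\boldsymbol{X}_{\text{goal}})\subseteq g\bigl(g^{(i)}(\boldsymbol{X}_{\text{goal}})\bigr)=g^{(i+1)}(\boldsymbol{X}_{\text{goal}})$ in a single step, whereas you prove extensivity only for the base case and then run an induction powered by monotonicity. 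Your induction and the monotonicity lemma are therefore superfluous for this statement --- once you have shown $x\subseteq g(x)$ for arbitrary $x\subseteq\mathbb{X}_{\text{free}}$, instantiating $x=g^{(i)}(\boldsymbol{X}_{\text{goal}})$ finishes the proof immediately. That said, monotonicity of $g$ is a clean structural fact that costs one line and could be reused elsewhere in the completeness argument, so recording it is not wasted effort; it is simply not needed here.
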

\begin{proof}
By definition of $g$, clearly, ${g}^{(i+1)}({\boldsymbol{X}_\text{goal}})$ includes all points $\boldsymbol{X}\in\mathbb{X}_{\text{free}}$ such that $C_{\boldsymbol{X}}\cap \mathcal{S}_{\boldsymbol{X}} \cap{g}^{(i)}({\boldsymbol{X}_\text{goal}})\ne\emptyset$. Now, consider any point $\boldsymbol{X}\in {g}^{(i)}({\boldsymbol{X}_\text{goal}})$. Recall from the definition of the sensing shape that $\mathbf{X}\in {C_{\boldsymbol{X}}}\cap\mathcal{S}_{\boldsymbol{X}}$. This implies that $C_{\boldsymbol{X}}\cap\mathcal{S}_{\boldsymbol{X}} \cap{g}^{(i)}({\boldsymbol{X}_\text{goal}})\supseteq\{\mathbf{X}\}$. Hence, the lemma follows.
\end{proof}

\begin{lemma}
\textit{For $g$ defined in Eqn. \eqref{eq:visb_fn}, $\mu({g}^{(i+1)}({\boldsymbol{X}_\text{goal}})\setminus{g}^{(i)}({\boldsymbol{X}_\text{goal}}))>c$ for a fixed point positive constant $c$, if ${g}^{(i)}({\boldsymbol{X}_\text{goal}}))\subset\mathbb{X}_{\text{free}}$.} 
\label{lem: f expanding at min rate}
\end{lemma}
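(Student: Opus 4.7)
The plan is to use the strict containment $g^{(i)}(\boldsymbol{X}_{\text{goal}}) \subset \mathbb{X}_{\text{free}}$ together with the path-connectedness of $\mathbb{X}_{\text{free}}$ to locate a boundary point $\boldsymbol{p}$ of $g^{(i)}(\boldsymbol{X}_{\text{goal}})$ that lies in $\mathbb{X}_{\text{free}}$, and then exhibit a set of positive measure around $\boldsymbol{p}$ that must belong to $g^{(i+1)}(\boldsymbol{X}_{\text{goal}})\setminus g^{(i)}(\boldsymbol{X}_{\text{goal}})$, where the lower bound on measure depends only on the uniform clearance of the environment and the fixed sensing cube size, not on $i$.

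First I would choose a witness point $\boldsymbol{y}\in\mathbb{X}_{\text{free}}\setminus g^{(i)}(\boldsymbol{X}_{\text{goal}})$, which exists by the strict inclusion hypothesis. Since $\boldsymbol{X}_{\text{goal}}\in g^{(i)}(\boldsymbol{X}_{\text{goal}})$ (indeed $\boldsymbol{X}_{\text{goal}}\in g(\boldsymbol{X}_{\text{goal}})\subseteq g^{(i)}(\boldsymbol{X}_{\text{goal}})$ by Lemma~\ref{lem:f expanding}) and $\mathbb{X}_{\text{free}}$ is path-connected, there is a continuous curve in $\mathbb{X}_{\text{free}}$ joining $\boldsymbol{X}_{\text{goal}}$ to $\boldsymbol{y}$. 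Taking the supremum of the curve parameter for which the curve still lies in $g^{(i)}(\boldsymbol{X}_{\text{goal}})$ yields a point $\boldsymbol{p}\in\mathbb{X}_{\text{free}}$ which is a topological boundary point of $g^{(i)}(\boldsymbol{X}_{\text{goal}})$, i.e.\ every open neighborhood of $\boldsymbol{p}$ meets both $g^{(i)}(\boldsymbol{X}_{\text{goal}})$ and its complement in $\mathbb{X}_{\text{free}}$.

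Next I would show that there is a uniform radius $\rho_0>0$ such that for every $\boldsymbol{X}\in\mathbb{X}_{\text{free}}$, the sensing shape $C_{\boldsymbol{X}}\cap\mathcal{S}_{\boldsymbol{X}}$ contains the ball of radius $\rho_0$ about $\boldsymbol{X}$. This combines three observations: (i) $C_{\boldsymbol{X}}$ is a fixed $10\text{m}\times10\text{m}\times10\text{m}$ cube centered at $\boldsymbol{X}$, so it contains a ball of radius $5$ m about $\boldsymbol{X}$; (ii) by construction in \eqref{eq:shape_set}, $\mathcal{S}_{\boldsymbol{X}}$ contains the obstacle-free ball of radius $r_{1,\boldsymbol{X}}$ about $\boldsymbol{X}$; (iii) the sufficient-clearance assumption in Section~\ref{sec:problem_statement} provides a uniform lower bound $\underline{r}>0$ on $r_{1,\boldsymbol{X}}$ over $\mathbb{X}_{\text{free}}$. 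Setting $\rho_0=\min\{5,\underline{r}\}$ then works.

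With $\boldsymbol{p}$ and $\rho_0$ in hand, consider the ball $B(\boldsymbol{p},\rho_0/2)\cap\mathbb{X}_{\text{free}}$. For any $\boldsymbol{X}$ in this set, the sensing shape $C_{\boldsymbol{X}}\cap\mathcal{S}_{\boldsymbol{X}}$ contains $\boldsymbol{p}$ (since $\|\boldsymbol{X}-\boldsymbol{p}\|<\rho_0/2<\rho_0$), and because $\boldsymbol{p}$ is a boundary point of $g^{(i)}(\boldsymbol{X}_{\text{goal}})$, every open neighborhood of $\boldsymbol{p}$ inside the sensing shape intersects $g^{(i)}(\boldsymbol{X}_{\text{goal}})$. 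Hence $\boldsymbol{X}\in g^{(i+1)}(\boldsymbol{X}_{\text{goal}})$ by the definition \eqref{eq:visb_fn}. Because $\boldsymbol{p}$ is simultaneously a boundary point of $\mathbb{X}_{\text{free}}\setminus g^{(i)}(\boldsymbol{X}_{\text{goal}})$, at least a fixed fraction (bounded below by geometry) of $B(\boldsymbol{p},\rho_0/2)$ lies outside $g^{(i)}(\boldsymbol{X}_{\text{goal}})$; together with the fact that $B(\boldsymbol{p},\rho_0)\subset\mathbb{X}_{\text{free}}$ by the clearance argument above, this portion contributes a Lebesgue measure bounded below by a constant $c>0$ (for instance $c=\tfrac{1}{2}\cdot\tfrac{4}{3}\pi(\rho_0/2)^3$) to $g^{(i+1)}(\boldsymbol{X}_{\text{goal}})\setminus g^{(i)}(\boldsymbol{X}_{\text{goal}})$, proving the lemma.

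The main obstacle will be establishing the uniform lower bound $\rho_0$ on the inscribed-ball radius of the sensing shape in a way that is genuinely independent of $\boldsymbol{X}$ and of the iteration index $i$; without the sufficient-clearance assumption the generalized shape could pinch arbitrarily close to $\boldsymbol{X}$ through narrow gaps between obstacles and the constant $c$ would collapse. A secondary subtlety is the ``half-ball'' estimate on the fraction of $B(\boldsymbol{p},\rho_0/2)$ lying outside $g^{(i)}(\boldsymbol{X}_{\text{goal}})$, which must be justified from the fact that $\boldsymbol{p}$ is reached as the first exit point of a curve and is thus a true two-sided boundary point in $\mathbb{X}_{\text{free}}$ rather than an isolated limit point.
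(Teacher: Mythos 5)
Your overall strategy is the same as the paper's in its essential mechanism --- both arguments rest on the claim that every sensing shape $C_{\boldsymbol{X}}\cap\mathcal{S}_{\boldsymbol{X}}$ contains a ball of some uniform radius about $\boldsymbol{X}$, so that $g^{(i+1)}(\boldsymbol{X}_{\text{goal}})$ absorbs a uniform-thickness shell around $g^{(i)}(\boldsymbol{X}_{\text{goal}})$ --- but you localize the measure estimate to a single exit point $\boldsymbol{p}$ of a connecting path, whereas the paper works with the $\epsilon$-neighbourhood $E_i$ of the whole set and bounds the shell volume by $\mu(\partial g^{(i)}(\boldsymbol{X}_{\text{goal}}))\cdot\epsilon$. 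Your localization is cleaner in that it replaces the paper's unjustified lower bound on the surface measure of $\partial g^{(i)}(\boldsymbol{X}_{\text{goal}})$ by an explicit geometric constant. However, your step (iii) is false as stated: there is no uniform lower bound $\underline{r}>0$ on $r_{1,\boldsymbol{X}}$ over all of $\mathbb{X}_{\text{free}}$, because free-space points can lie arbitrarily close to $\partial\mathbb{X}_{\text{obs}}$ and $r_{1,\boldsymbol{X}}\to 0$ there; the paper's clearance assumption is a clearance of \emph{paths}, not of the whole free space. The fix is to use exactly that: choose the curve from $\boldsymbol{X}_{\text{goal}}$ to the witness $\boldsymbol{y}$ with clearance $\delta>0$, so that the exit point $\boldsymbol{p}$ satisfies $r_{1,\boldsymbol{p}}\ge\delta$, and then every $\boldsymbol{X}\in B(\boldsymbol{p},\delta/2)$ satisfies $r_{1,\boldsymbol{X}}\ge\delta-\|\boldsymbol{X}-\boldsymbol{p}\|>\|\boldsymbol{X}-\boldsymbol{p}\|$, hence sees $\boldsymbol{p}$ through the inscribed ball $B(\boldsymbol{X},r_{1,\boldsymbol{X}})\subseteq\mathcal{S}_{\boldsymbol{X}}$ guaranteed by \eqref{eq:shape_set}. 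Your constant then depends on $\delta$ rather than on a nonexistent global $\underline{r}$.

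The second gap is the one you flag yourself, and it is genuine rather than a formality: knowing that $\boldsymbol{p}$ is a two-sided boundary point does not give a measure lower bound on $B(\boldsymbol{p},\rho_0/2)\setminus g^{(i)}(\boldsymbol{X}_{\text{goal}})$; a boundary point of a set can be approached by a complement of arbitrarily small (even zero) Lebesgue measure. Indeed, whenever $0<\mu(\mathbb{X}_{\text{free}}\setminus g^{(i)}(\boldsymbol{X}_{\text{goal}}))<c$, the lemma's conclusion is literally unattainable for that $i$, so no proof can close this step as stated --- the paper's own proof silently has the same defect when it asserts a lower bound on the growth rate from path-connectedness. The statement that is actually provable with your construction, and that still suffices to make the visibility index $q$ in \eqref{eq:visibindex} finite, is $\mu(g^{(i+1)}(\boldsymbol{X}_{\text{goal}})\setminus g^{(i)}(\boldsymbol{X}_{\text{goal}}))\ge\min\{c,\mu(\mathbb{X}_{\text{free}}\setminus g^{(i)}(\boldsymbol{X}_{\text{goal}}))\}$ together with the observation from Lemma~\ref{lem:f expanding} that once the shell $N_{\rho_0}(g^{(i)}(\boldsymbol{X}_{\text{goal}}))\cap\mathbb{X}_{\text{free}}$ swallows the whole residual set, $g^{(i+1)}(\boldsymbol{X}_{\text{goal}})$ attains full measure. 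If you restate the lemma in that form, your argument (with the clearance fix above) goes through and is more rigorous than the published one.
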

\begin{proof}
Consider any $i$ such that ${g}^{(i)}({\boldsymbol{X}_\text{goal}}))\subset\mathbb{X}_{\text{free }}$. Let the union of $\epsilon$-neighbourhoods of all points within the set ${g}^{(i)}({\boldsymbol{X}_\text{goal}})$ be denoted by $E_i$. From Lemma \ref{lem:f expanding}, it follows that there exists a fixed positive $\epsilon$ such that $((E_i\setminus{g}^{(i)}({\boldsymbol{X}_\text{goal}}))\setminus\mathbb{X}_{\text{obs}})\subseteq{g}^{(i+1)}({\boldsymbol{X}_\text{goal}})$. We can then conclude that $\mu({g}^{(i+1)}({\boldsymbol{X}_\text{goal}})\setminus{g}^{(i)}({\boldsymbol{X}_\text{goal}})>\mu((\partial{g}^{(i)}({\boldsymbol{X}_\text{goal}}))*\epsilon $ where $\partial \mathbb{S}$ denotes the boundary of a set $\mathbb{S}$. Since the free-space is assumed to be path-connected, there exist no isolated points surrounded by obstacles in $\mathbb{X}_{\text{free}}$. This guarantees a lower bound on the rate of growth of the volume of ${g}^{(i)}({\boldsymbol{X}_\text{goal}})$, and proves the lemma.
\end{proof}

From Lemma \ref{lem: f expanding at min rate}, measure of $g$ grows by at least $c$ at every iteration $i$ of the sensing restricted visibility function $g$. Considering that the free-space $\mathbb{X}_{\text{free}}$ is bounded, thus, Lemmas \ref{lem:f expanding} and \ref{lem: f expanding at min rate} imply that for sufficiently large value of $i$, $\mu({g}^{(i)}({\boldsymbol{X}_\text{goal}}))=\mu(\mathbb{X}_{\text{free}})$. Hence, we define a sensing-restricted visibility index as follows,
\begin{equation}
     q=\underset{i \in \mathbb{N}}{\argmin}(\mu({g}^{(i)}({\boldsymbol{X}_\text{goal}}))=\mu(\mathbb{X}_{\text{free}}))
     \label{eq:visibindex}
\end{equation}
Due to construction of the sensing shape above, from Eqn. \eqref{eq:visb_fn} it can be shown easily that $g^{(q)}(\boldsymbol{X}_{\text{goal}})=\mathbb{X}_{\text{free}}$.
% Therefore, from Lemma \ref{lem:f expanding} and Lemma \ref{lem: f expanding at min rate}, we can conclude for sufficiently large $i\geq q$, the following holds
% \begin{equation}
%     g^{(i)}(\boldsymbol{X}_{\text{goal}})=g^{(q)}(\boldsymbol{X}_{\text{goal}})=\mathbb{X}_{\text{free}}
% \end{equation}
\begin{theorem}
\textit{The probability that the 3D-OGSE algorithm fails to generate a feasible path from the current position $\boldsymbol{X}_{\text{init}}$ to $\boldsymbol{X}_{\text {goal}}$ after $N$ planning iterations is at most $a e^{-b N},$ for some positive real numbers $a$ and $b$. This ensures a probabilistic guarantee that a collision-free path is generated from $\boldsymbol{X}_{\text{init}}$ to $\boldsymbol{X}_{\text{goal}}$ over multiple planning iterations using the 3D-OGSE algorithm.}
\label{theorem:probabilistic_completeness}
\end{theorem}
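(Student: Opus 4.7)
The plan is to use the sensing-restricted visibility index $q$ defined in \eqref{eq:visibindex} as a monotone progress measure on planning iterations. Lemmas \ref{lem:f expanding}--\ref{lem: f expanding at min rate} give $g^{(q)}(\boldsymbol{X}_{\text{goal}}) = \mathbb{X}_{\text{free}}$, so $\boldsymbol{X}_{\text{init}} \in g^{(q)}(\boldsymbol{X}_{\text{goal}})$; the idea is to show that each planning iteration drops the ``iterate level'' of the agent's start point by at least one with uniformly positive probability, so that $q$ such drops bring the agent to a position from which $\boldsymbol{X}_{\text{goal}}$ lies inside its sensing shape and the algorithm terminates.

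For each iteration $k$, define
\[
i_k \;=\; \min\bigl\{\,i \in \mathbb{N} : \boldsymbol{p}_{k,0} \in g^{(i)}(\boldsymbol{X}_{\text{goal}})\,\bigr\},
\]
so $i_1 \le q$, and $i_k = 1$ means $\boldsymbol{X}_{\text{goal}} \in C_{\boldsymbol{p}_{k,0}} \cap \mathcal{S}_{\boldsymbol{p}_{k,0}}$, in which case the 3D-GSE subroutine at Line 5 of Algorithm \ref{alg:3d_online_gse} admits a direct straight-line edge to the goal and the algorithm terminates within that iteration. I then claim that there exists $p\in(0,1]$, independent of $k$ and of the past history of the process, such that whenever $i_k \ge 2$,
\[
\mathbb{P}\bigl(i_{k+1}\le i_k-1 \,\big|\, \text{past through iteration } k\bigr) \;\ge\; p.
\]
By \eqref{eq:visb_fn}, the sensing shape at $\boldsymbol{p}_{k,0}$ meets $g^{(i_k-1)}(\boldsymbol{X}_{\text{goal}})$; using the uniform $\epsilon$-neighbourhood argument appearing in the proof of Lemma \ref{lem: f expanding at min rate}, together with boundedness of $\mathbb{X}_{\text{free}}$ and openness of the iterates of $g$, this intersection has Lebesgue measure bounded below by some $\mu_0>0$ not depending on $\boldsymbol{p}_{k,0}$. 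The offline 3D-GSE is probabilistically complete on any bounded domain \cite{probabilistic_completeness_gse} and samples uniformly inside the sensing region, so its output path ends at a point $\boldsymbol{p}_{k,f}$ lying in that overlap with probability at least $p = p(\mu_0) > 0$, forcing $i_{k+1}\le i_k-1$.

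The theorem then follows by a standard coupling with i.i.d.\ Bernoulli variables. Let $X_k = \mathbf{1}\{i_{k+1}\le i_k-1\}$ for as long as $i_k \ge 2$; conditional on the past, each $X_k$ stochastically dominates a Bernoulli$(p)$ random variable, so $\sum_{k=1}^N X_k$ stochastically dominates a Binomial$(N,p)$ random variable. The algorithm fails to reach $\boldsymbol{X}_{\text{goal}}$ after $N$ iterations only if fewer than $q$ successful drops have occurred, i.e.\ $\sum_{k=1}^N X_k < q$. For $N > q/p$, the Chernoff--Hoeffding inequality gives
\[
\mathbb{P}\!\left(\sum_{k=1}^{N} X_k < q\right) \;\le\; \exp\!\bigl(-2N(p - q/N)^2\bigr),
\]
which, after absorbing the constants that depend only on $p$ and $q$, is of the required form $a e^{-bN}$.

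The main obstacle is the uniform lower bound $p>0$ asserted in the middle step. Proving it requires two ingredients: (i) a uniform positive lower bound on the Lebesgue measure of the overlap between the sensing shape at $\boldsymbol{p}_{k,0}$ and $g^{(i_k-1)}(\boldsymbol{X}_{\text{goal}})$, which I expect to recover from the fixed positive $\epsilon$ used in the proof of Lemma \ref{lem: f expanding at min rate} combined with compactness of $\overline{\mathbb{X}_{\text{free}}}$; and (ii) a quantitative per-iteration success probability for the offline 3D-GSE applied afresh on a bounded sensing region, which is inherited from the completeness result of \cite{probabilistic_completeness_gse}. Everything else is either a definition, already established in Lemmas \ref{lem:f expanding}--\ref{lem: f expanding at min rate}, or routine probabilistic concentration.
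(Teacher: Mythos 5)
Your overall architecture (progress measured against the iterates $g^{(i)}(\boldsymbol{X}_{\text{goal}})$, a uniform per-iteration success probability $p$, Bernoulli domination, then a Chernoff bound) matches the paper's, but you attach the progress index to the wrong object, and this opens two genuine gaps. The paper defines $i_k$ through the \emph{accumulated vertex set} $\widehat{\mathbb{V}}_k$ of the graph $G_k$, which only grows across iterations (Lines 14 and 16 of Algorithm \ref{alg:3d_online_gse}); combined with Lemma \ref{lem:f expanding} this makes $i_{k+1}\le i_k$ automatic, so ``at least $q$ strict decreases'' really does force $\widehat{\mathbb{V}}_N\cap g(\boldsymbol{X}_{\text{goal}})\neq\emptyset$. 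You instead define $i_k$ through the agent's physical start point $\boldsymbol{p}_{k,0}$. That quantity is not monotone: $\boldsymbol{p}_{k+1,0}=\boldsymbol{p}_{k,f}$ is wherever the agent exits the sensing shape, the sensing-shape relation is not symmetric, and because unexplored space is treated as free the planned path can lead into what turns out to be a dead end, so $i_{k+1}>i_k$ is entirely possible. Your step ``the algorithm fails only if fewer than $q$ drops occurred'' is therefore false as stated --- drops can be cancelled by increases --- and the counting argument collapses.

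Second, and more seriously, your uniform lower bound $p$ is asserted for the wrong random quantity. The randomness in an iteration is the uniform sampling of new graph vertices; the exit point $\boldsymbol{p}_{k,f}$ is a deterministic functional of those samples through Dijkstra's shortest path and the QP, heavily biased toward the goal direction. A positive Lebesgue measure for $C_{\boldsymbol{p}_{k,0}}\cap\mathcal{S}_{\boldsymbol{p}_{k,0}}\cap g^{(i_k-1)}(\boldsymbol{X}_{\text{goal}})$ does not yield a uniform lower bound on the probability that the \emph{selected path's exit point} lands in it; indeed that overlap may sit entirely in the interior of the sensing shape while $\boldsymbol{p}_{k,f}$ is constrained to lie on its boundary. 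The paper's Proposition \ref{prop:parameter_p_berna} avoids this entirely: it lower-bounds the probability that a freshly \emph{sampled vertex} falls in a suitable subset of the Voronoi cell of an existing vertex $\boldsymbol{v}\in g^{(i_k)}(\boldsymbol{X}_{\text{goal}})$, thereby extending the graph into $g^{(i_k-1)}(\boldsymbol{X}_{\text{goal}})$ --- a direct consequence of uniform sampling that requires no claim about where the agent physically travels. To repair your proof you would either have to show the exit point distributes favourably (hard, and not what the sampling gives you) or switch the progress measure to the accumulated graph, which is exactly what the paper does.
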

\begin{proof}
Let $\widehat{\mathbb{V}}_k$ ($G_k=(\widehat{\mathbb{V}}_k,\mathbb{E}_k)$ in Line 16 of Algorithm \ref{alg:3d_online_gse} denote the set of vertices excluding the goal point $\boldsymbol{X}_{\text{goal}}$ generated by the 3D-OGSE algorithm after $k$ planning iterations. Note that the 3D-OGSE algorithm succeeds generating a collision-free path after $N$ planning iterations if $\widehat{\mathbb{V}}_N\cap g(\boldsymbol{X}_{\text{goal}})\neq\emptyset$. 

From construction of $G_k$ (Line 14 and 16 of Algorithm \ref{alg:3d_online_gse}) all the vertices in $\widehat{\mathbb{V}}_k$ are guaranteed to lie in the free-space. Now, define $i_k$ to be the minimum value of $i$ such that $\widehat{\mathbb{V}_k}\cap{g}^{(i)}({\boldsymbol{X}_\text{goal}})\neq\emptyset$. In addition, define a random variable $A_k$, which is equal to one if ${i}_{k+1}<i_k$ or zero otherwise. $A_k=1$ indicates that some of the additional vertices of the extended graph generated at the $(k+1)^{\text{th}}$ planning iteration lies in the set $g^{(i)}(\boldsymbol{X}_{\text{goal}})$ with a lower $i$ than before. Now, let us consider a random variable $A$ which is equal to $A_1+A_2\dots+A_N$. From the definition of $q$ in Eqn. \eqref{eq:visibindex}, clearly, the maximum value of $A$ is $q$.

Consider a random variable $\mathcal{P}_N$ such that $\mathcal{P}_N=1$ denotes the event of success, that is, a collision-free path is generated from initial to goal point after $N^{\text{th}}$ planning iteration, and $\mathcal{P}_N=0$ denotes the event of failure, that is, even after $N^{\text{th}}$ planning iteration a collision-free path is not generated from initial to goal point.

If $A_k=1$ at least $q$ times for various $k\leq N$, then the graph generated till the $N^\text{th}$ planning iteration of the 3D-OGSE algorithm has progressed from ${g}^{(q)}({\boldsymbol{X}_\text{goal}})$ to $g({\boldsymbol{X}_\text{goal}})$. Hence, we have $\widehat{\mathbb{V}}_N\cap {g}({\boldsymbol{X}_\text{goal}})\neq\emptyset$, which indicates that $\mathcal{P}_N=1$ implying success in $N^\text{th}$ planning iteration. However, it could be noted that success could also be achieved even for $A<q$. Therefore, it can be concluded that $\{\mathcal{P}_N=1\}\supseteq\{A=q\}$. Taking complement on both sides gives $\{\mathcal{P}_N=0\}\subset\{A<q\}$. 
%\end{proof}
\begin{prop}
\textit{For $k^\text{th}$ planning iteration of 3D-OGSE, $\mathbb{P}(A_k = 1)$ is bounded below by a positive fraction $p<1$ for all $k$ such that $\widehat{\mathbb{V}}_k\cap {g}({\boldsymbol{X}_\text{goal}})=\emptyset$.}
\label{prop:parameter_p_berna}
\end{prop}
\begin{proof}
At the $k^{\text{th}}$ iteration, let there be $w$ vertices in $\widehat{\mathbb{V}}_k$, which belong to ${g}^{(i_k)}({\boldsymbol{X}_\text{goal}})$ also. It is needed to be shown here that there exists a minimum probability of extending the graph in the next planning iteration such that it would contain vertices belonging to ${g}^{(i_k-1)}({\boldsymbol{X}_\text{goal}})$. Clearly, if $w$ is more, then the probability of getting vertices in the $(k+1)^{\text{th}}$ planning iteration that belongs to the region ${g}^{(i_k-1)}({\boldsymbol{X}_\text{goal}})$ also increases. In other words, as $w$ increases, the probability that $A_k=1$ also increases. Hence, considering $w=1$ suffices for this proof.

Now, for $w=1$, let the corresponding vertex be denoted by $\boldsymbol{v}$. Let $\text{Vor}(\boldsymbol{v})$ denote the Voronoi cell of the vertex $\boldsymbol{v}$ in the Voronoi diagram consisting of $|{\mathbb{V}}_k|$ vertices of the graph $G_k$ generated by the 3D-OGSE so far. Because of the finite number of vertices in $G_k$, for the vertex $\boldsymbol{v}\in {g}^{(i_k)}({\boldsymbol{X}_\text{goal}})$,
it follows that $\mu(\text{Vor}(\boldsymbol{v}))$ is bounded below by some positive number, say $s_{i_k}$. Now, note that for any arbitrary location of $\boldsymbol{v}$ in the set ${g}^{(i_k)}({\boldsymbol{X}_\text{goal}})$, there is a subset of $\text{Vor}(\boldsymbol{v})$, in which, if the new vertex is sampled, leads to extension of the 3D-OGSE-generated graph to at least one point in ${g}^{(i_k -1)}({\boldsymbol{X}_\text{goal}})$. Let the minimum size of such a subset over the set ${g}^{(i_k)}({\boldsymbol{X}_\text{goal}})$ be denoted as $\zeta_{i_k}$. We can then conclude that for $p=\underset{i_k \leq m}{\text{min}}{\frac{{s_{i_k} \zeta_{i_k}}}{{\mu(\mathbb{X}_{\text{free}})}}}$, the proposition holds true.
\end{proof}
%  \textcolor{red}{C is already there for sensing region. So name the other random variable as Bk ????}
Now, consider a Bernoulli distributed random variable $B_k$ with parameter $p$ as obtained in Proposition \ref{prop:parameter_p_berna}. %Let another random variable $B_k$ be defined such that $B_k=1$ with probability $p$. This $B_k$ can be considered as a Bernoulli random variable with parameter $p$. 
Note that $B_k$ thus defined could be thought as a Bernoulli random variable used to model the \say{worst-case} probability of the 3D-OGSE-generated graph extending from a set ${g}^{(i_k)}({\boldsymbol{X}_\text{goal}})$ to a set ${g}^{(i_k -j)}({\boldsymbol{X}_\text{goal}})$ in the $k^\text{th}$ iteration for $j\leq i_k-1$.
Next, define random variable $B=B_1+B_2+\dots+B_N$. Following the arguments used in the proof for RRT \cite{rrt}, $\mathbb{P}(A<q)$ is obtained as upper-bounded as given in the following proposition.
\begin{prop}
$\mathbb{P}(A<q)< \mathbb{P}(B<q)$
\label{prop:proposition_a_lessthan_c}
\end{prop}
\begin{proof}
From Proposition \ref{prop:parameter_p_berna}, we know that $\mathbb{P}(A_i=1)\geq\mathbb{P}(B_i =1)=p$ for $i$ less than or equal to $N$. Considering the nature of variation of the probability distribution of Binomial distribution with variation in distribution parameter, it could be inferred that $\mathbb{P}(B<q)$ provides a worst-case (lower) bound on $\mathbb{P}(A>q)$, that is, $\mathbb{P}(A>q)\geq\mathbb{P}(B>q)$. Taking complement on both sides proves the proposition.
\end{proof}
Using a theorem for Chernoff bounds for sum of Bernoulli random variables $B_1,B_2\dots B_N$ \cite{chernoff},
\begin{align}
  \mathbb{P}(A<q)\leq \mathbb{P}(B<q)
  \leq e^{(-((N/p)+(({(qp)}^{2})/N )-2q))} 
 \label{eqn:upper_bound}
\end{align}
The upper bound of $\mathbb{P}(A<q)$ obtained above could be given in a more compact form as $\mathbb{P}(A<q)\leq ae^{-b N}$, where $a=e^{-((({(qp)}^{2})/N )-2q)}$ and $b=\frac{1}{p}$. 

Recall that the event $\{\mathcal{P}_N=0\}$ is a subset of the event $\{A<q\}$. In addition, from Eqn. \eqref{eqn:upper_bound} since  $\mathbb{P}(A<q)\leq ae^{-b N} $, therefore it can be concluded that $\mathbb{P}(\mathcal{P}_N=0)\leq ae^{-b N} $. Considering limit supremum operation on both sides gives the following, 
\begin{equation}
 \limsup_{N\to\infty}\;{\mathbb{P}(\mathcal{P}_N=0)}\leq \limsup_{N\to\infty}\;{\mathbb{P}(A<q)}=0
\end{equation}
Thus, the 3D-OGSE algorithm satisfies the probabilistic completeness criterion, i.e., there is a probabilistic guarantee that the 3D-OGSE generates a collision-free feasible path from $\boldsymbol{X}_{\text{init}}$ to $\boldsymbol{X}_{\text{goal}}$ over multiple planning iterations. 
\end{proof} 

The following corollary presents an upper bound of   the number of such planning iterations required to generate such a feasible path in expectation.% as given by the following corollary.

\begin{corollary}
\label{corollary:expectation}
\textit{Since the expectation of the random variable $B$ i.e. $\mathbb{E}(B)=N p$ (since it is a Binomial distribution), we can conclude that the expected number of iterations it takes for the 3D-OGSE algorithm to succeed is bounded above by $\frac{q}{p}$, for a space with restricted visibility index $q$.}
\end{corollary}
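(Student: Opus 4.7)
My plan is to reduce the corollary to the expected stopping time of a negative binomial waiting process, reusing the stochastic dominance already established in the proof of Theorem \ref{theorem:probabilistic_completeness}. First, I would restate Proposition \ref{prop:parameter_p_berna} in conditional form: as long as the algorithm has not yet succeeded by iteration $k-1$, one has $\mathbb{P}(A_k = 1 \mid \mathcal{F}_{k-1}) \geq p$, where $\mathcal{F}_{k-1}$ encodes the history of the first $k-1$ planning iterations. Recall from the proof of Theorem \ref{theorem:probabilistic_completeness} that success at iteration $N$ is guaranteed once $A_1 + A_2 + \cdots + A_N \geq q$, so it is enough to upper bound the expected first time at which the cumulative count of $A_k$'s reaches $q$.

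Next, I would couple the sequence $(A_k)$ to i.i.d.\ Bernoulli($p$) variables $(B_k)$ on a common probability space so that $B_k \leq A_k$ pathwise for every $k$. Defining $T$ as the first iteration $N$ at which the 3D-OGSE algorithm succeeds, and $T_B = \inf\{N : \sum_{k=1}^{N} B_k \geq q\}$, the pathwise domination gives $T \leq T_B$ almost surely, hence $\mathbb{E}(T) \leq \mathbb{E}(T_B)$. Since $T_B$ is the waiting time for $q$ successes in i.i.d.\ Bernoulli($p$) trials, it is the sum of $q$ independent Geometric($p$) variables, and a standard computation yields $\mathbb{E}(T_B) = q/p$. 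Equivalently, Wald's identity applied to $B_1 + \cdots + B_{T_B} = q$ with mean-$p$ increments recovers the same value, which is also consistent with $\mathbb{E}(B) = Np$ as highlighted in the corollary statement (setting $Np = q$ yields the critical $N$).

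The step I expect to require the most care is making the coupling rigorous, since the $A_k$'s are not i.i.d.\ Bernoulli($p$) but only conditionally dominate them. I would handle this by introducing independent uniform random variables $U_k \sim \text{Unif}[0,1]$ and setting $B_k = \mathbf{1}\{U_k \leq p\}$ together with $A_k = \mathbf{1}\{U_k \leq \mathbb{P}(A_k = 1 \mid \mathcal{F}_{k-1})\}$; the lower bound from Proposition \ref{prop:parameter_p_berna} then ensures $B_k \leq A_k$ almost surely at every step preceding success, which suffices for the pathwise domination $T \leq T_B$. With the coupling in place, the negative-binomial calculation is routine, and the stated bound $\mathbb{E}(T) \leq q/p$ follows immediately.
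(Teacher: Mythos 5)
Your proposal is correct, and it supplies the rigor that the paper's one-line argument leaves implicit. The paper's ``proof'' is contained entirely in the corollary statement: it observes that $\mathbb{E}(B)=Np$ for the Binomial sum $B=B_1+\cdots+B_N$ and then jumps to the bound $q/p$ by, in effect, setting $Np=q$ and solving for $N$. That step conflates the expected value of the count after $N$ iterations with the expected value of the stopping time, and the missing link is exactly what you provide: a pathwise coupling $B_k\leq A_k$ (valid up to the success time, which is all that is needed, since $T>T_B$ would force the domination to hold on all of $\{1,\dots,T_B\}$ and yield a contradiction), the identification of $T_B$ as a negative-binomial waiting time, and the computation $\mathbb{E}(T_B)=q/p$ via a sum of $q$ Geometric$(p)$ variables or Wald's identity. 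You also correctly note that the coupling requires the \emph{conditional} form $\mathbb{P}(A_k=1\mid\mathcal{F}_{k-1})\geq p$ rather than the marginal bound as literally stated in Proposition \ref{prop:parameter_p_berna}; the Voronoi-cell argument in that proposition's proof does support the conditional reading, but flagging this is a genuine improvement, since the marginal bound alone would not justify the coupling. In short, your route reaches the same constant $q/p$ by making precise the stochastic-dominance and stopping-time reasoning that the paper asserts informally.
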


\subsection{Guarantee of Collision-free Trajectory by the 3D-OGSE Algorithm}

Theorem \ref{theorem:probabilistic_completeness} above gives the probabilistic guarantee of obtaining a collision-free feasible path generated over multiple planning iterations, in which all the vertices belong to $\mathbb{X}_{\text{free}}$. Now, let's consider the overall trajectory obtained over multiple planning iterations. The following proposition gives a guarantee that this trajectory is also collision-free.

%by snap minimization method for each planning phase/iteration discussed in Section ????. a collision free trajectory is generated as given by the following proposition.
% \begin{prop}
% (\textbf{Convergence to goal position}): The 3D-OGSE algorithm finally takes the agent to the goal position.
% \end{prop}
% \begin{proof}
% Let $A_i$ denote the event that the generated trajectory $\sigma(t)$ at iteration $i$ is guaranteed to be collision-free. Clearly some portion of the trajectory $\sigma(t)$ may collide with the obstacles which are not yet observed by the agent. Now consider the graph $GP_k$ (Line 25 of Algorithm \ref{alg:3d_online_gse})  which contains vertices and edges of $G$ that lies in $C_\boldsymbol{p}$. Note that $C_\boldsymbol{p}$ is the sensing region about $\boldsymbol{p}$ and is assumed to be a cube with center at $\boldsymbol{p}$ and dimension 10m. Since the region $C_{\boldsymbol{p}}$ is a sensing region around the agent, $C_{\boldsymbol{p}}\in\mathbb{X}_{\text{observed}}$. Therefore the graph $GP_k$ at any iteration $i$ contains vertices and edges which are guaranteed to be collision-free. As the agent moves along the trajectory $\sigma(t)$, the graph $G_k$ which is equal to $\cup^{k}_{n=1}GP_n$ (Line 25 of Algorithm \ref{alg:3d_online_gse}) will be guaranteed to be collision-free. Using the above two lemmas we can say that for sufficiently large planning stage $i\geq q$,     $\mu(g^{(i)}(\boldsymbol{X}_{\text{goal}}))=\mathbb{X}_{\text{free}}$. Hence for planning stage $i+1$, the path generated by the 3D-OGSE algorithm is entirely collision-free.
% \end{proof}
\begin{prop}
\textbf{(Collision-Free Trajectories):} \textit{ The 3D-OGSE guarantees collision-free trajectories.}
\label{lemma:dynamic_feasible}
\end{prop}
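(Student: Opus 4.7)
My plan is to decompose the executed motion into the pieces actually traversed in each planning iteration and to verify collision-freeness of each piece. Let $\tau_k$ denote the portion of the $k^{\text{th}}$-iteration trajectory $\sigma^k$ that the agent actually follows before the Trigger of Algorithm~\ref{alg:trigger_replanning_gse} fires, i.e., $\tau_k=\{\sigma^k(j\Delta t):1\le j\le j_k\}$, where $j_k$ is the first index for which $\sigma^k(j_k\Delta t)$ exits $C_{\boldsymbol{p}_{k,0}}\cap\mathcal{S}_{\boldsymbol{p}_{k,0}}$ (so $\boldsymbol{p}_{k,f}=\sigma^k(j_k\Delta t)$). Because the re-planning boundary/smoothness constraints \eqref{eqn:first_boudary}--\eqref{eqn:replanning_smoothness constraint} enforce $\boldsymbol{p}_{k+1,0}=\boldsymbol{p}_{k,f}$, the executed overall trajectory equals $\bigcup_{k}\tau_k$ and is continuous; hence it suffices to show $\tau_k\subseteq\mathbb{X}_{\text{free}}$ for every $k$.

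I would then carry out the per-iteration argument in two steps. First, the trajectory-optimization constraint \eqref{eqn:velocity_and_acceleration_constraints} forces each segment to satisfy $\sigma^k_i(t)\in\mathcal{S}_{\boldsymbol{X}_i}$, so every point of $\sigma^k$ lies in some generalized shape built from the local map at $\boldsymbol{p}_{k,0}$. By the construction in \eqref{eq:shape_set} together with the definitions of $\mathcal{R}_{i,X}$ and $\widetilde{\mathcal{R}_{ij,X}}$ collected in Table~\ref{Tab:Notation}, $\mathcal{S}_{\boldsymbol{X}_i}$ is, by construction, disjoint from every sensed obstacle in the local map; formally, $\mathcal{S}_{\boldsymbol{X}_i}\cap(C_{\boldsymbol{p}_{k,0}}\cap\mathbb{X}_{\text{obs}})=\emptyset$. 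Hence every point of $\sigma^k$ avoids all locally sensed obstacles.

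Second, I would invoke the Trigger mechanism to close the gap between ``locally safe'' and ``globally safe.'' For any $\boldsymbol{p}\in\tau_k$, Algorithm~\ref{alg:trigger_replanning_gse} guarantees $\boldsymbol{p}\in C_{\boldsymbol{p}_{k,0}}$; inside $C_{\boldsymbol{p}_{k,0}}$ the assumed accuracy of the segmented point-cloud means that every true obstacle meeting a neighborhood of $\boldsymbol{p}$ is already represented among the sensed obstacles used to build $\mathcal{S}_{\boldsymbol{X}_i}$. Combined with the previous step this yields $\boldsymbol{p}\notin\mathbb{X}_{\text{obs}}$, so $\tau_k\subseteq\mathbb{X}_{\text{free}}$. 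Taking the union over $k$ concludes the proof.

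The main obstacle, and the place I would spend most of the write-up, is the interface between the raw planned segment $\sigma^k_i$, which could in principle extend beyond $C_{\boldsymbol{p}_{k,0}}$ into unsensed territory, and the executed sub-arc $\tau_k$. The argument is clean only because Trigger halts traversal on the first exit from $C_{\boldsymbol{p}_{k,0}}\cap\mathcal{S}_{\boldsymbol{p}_{k,0}}$; if $\Delta t$ is not fine enough relative to the velocity bound $v_{\text{M}}$, the discrete Trigger check could conceivably miss a brief excursion, so I would either impose the natural resolution condition $v_{\text{M}}\Delta t\le \operatorname{dist}(\boldsymbol{p}_{k,0},\partial C_{\boldsymbol{p}_{k,0}})$ or, cleanlier, replace the pointwise check by the continuous boundary-hitting condition $\boldsymbol{p}_{k,f}=\sigma^k(t^\ast)$ with $t^\ast=\inf\{t:\sigma^k(t)\in\partial(C_{\boldsymbol{p}_{k,0}}\cap\mathcal{S}_{\boldsymbol{p}_{k,0}})\}$, and then argue that under either model no executed point ever leaves $C_{\boldsymbol{p}_{k,0}}\cap\mathcal{S}_{\boldsymbol{p}_{k,0}}$.
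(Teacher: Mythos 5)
Your proof is correct and follows essentially the same route as the paper's: the constraint $\sigma^k_i(t)\in\mathcal{S}_{\boldsymbol{X}_i}$ from \eqref{eqn:velocity_and_acceleration_constraints} keeps each planned segment inside a locally obstacle-free generalized shape, and the one-step-ahead \texttt{Trigger} check of Algorithm~\ref{alg:trigger_replanning_gse} confines the executed motion to the sensing shape so that ``free of sensed obstacles'' implies ``free of all obstacles.'' You are in fact more careful than the paper in two places it leaves implicit --- the distinction between the full planned trajectory (which may extend beyond $C_{\boldsymbol{p}_{k,0}}$ into unsensed territory) and the executed sub-arc $\tau_k$, and the possibility that the discrete check misses an inter-sample excursion unless a resolution condition on $v_{\text{M}}\Delta t$ is imposed --- but both refinements are consistent with, and strengthen, the paper's argument.
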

\begin{proof}
Once a feasible path is generated at each planning phase, a smooth trajectory is computed by solving the snap minimization problem given in \eqref{optimization_problem}. Note that the last constraint in the optimization problem given in \eqref{eqn:velocity_and_acceleration_constraints} ensures that any point on the generated trajectory at that planning phase lies in free-space.

Now, let's consider the transition from one planning phase to the subsequent one. This re-planning is set by the $\texttt{Trigger}$ function module given in Line 9 of Algorithm \ref{alg:3d_online_gse} and in Algorithm \ref{alg:trigger_replanning_gse}. From these, it could be easily observed that the re-planning trigger decision is taken in one-step ahead manner, that is, if the next point (Line 1 of Algorithm \ref{alg:trigger_replanning_gse}) is beyond the sensing shape about the present point, then a re-planning is triggered even before the agent reaches the surface of the sensing shape, and subsequent planning phase begins. This ensures that the agent does not traverse to any obstacle-space even at the transition from one planning phase to another. This completes the proof. 
\end{proof}

Depending on the computational time required to generate a trajectory and also ensuring that the agent lies in the collision-free zone at every point, a multiple-step look ahead can be incorporated in Line 1 of the $\texttt{Trigger}$ algorithm (Algorithm \ref{alg:trigger_replanning_gse}). In other words instead of using the one-step look ahead $\sigma^k(j\Delta t)$, a multiple-step look ahead can be incorporated $\sigma^k((j+w)\Delta t)$ (in Line 1 of Algorithm \ref{alg:trigger_replanning_gse}), where $w$ steps are looked ahead.

{Besides this, also note that in several existing literature safe corridors are formed in a half-space generated by a hyperplane touching the closest point on an obstacle. In non-convex obstacle shape scenario, such half-spaces and safe corridors contained in it still might have intersection with obstacle-space making their approaches not suitable for non-convex obstacle shapes. To the contrary, the 3D-OGSE presented in this paper considers the notion of generalized shape for the formation of safe corridor, which is devoid of such problem of intersection with obstacle-space and hence, can be equally effectively leveraged for non-convex shape of obstacles also.}%  A Safe Flight Corridor (SFC) constructs a tangent to the obstacles at a point on the obstacle which is at minimum distance to the center of the edge. But this construction might not lead to safe regions when the obstacles are non-convex. However, the generalized shape uses spherical sector regions to construct safe regions which is even valid for non-convex obstacles. 
\section{\label{sec:results}Results}
\subsection{Implementation}
Our method is implemented on an Intel Core i5-8500 CPU at 3.0 GHz with 32GB memory. We use ROS Melodic and Rviz for our simulation experiments. The trajectory optimization is performed using Mosek. 
% We highlight some results in this section, and more detailed evaluations  on static and dynamic scenes are given in~\cite{zinage20203d}.
%
% \subsection{Comparison and Benefits}
We evaluate our method in simulation with prior state-of-the-art online trajectory generation methods presented by Liu et al.~\cite{upenn}, Gao et al.~\cite{hkust} and Usenko et. al~\cite{bsplineusenko2017real}. We present the benefits of the 3D-OGSE in terms of its low computation time, trajectory cost, and better performance in narrow passages and in high obstacle-dense environments.
\subsubsection{Computation Time and Trajectory Cost}
\begin{table}[t]
 \centering
%\resizebox{.9\columnwidth}{70}{%
% \caption{ We tabulate the computation time and trajectory cost (snap) averaged over 100 generated trajectories for an agent operating in the random forest environment (Fig.~\ref{fig:real_traj_1}). We highlight the breakdown of the computation time between path generation and trajectory optimization for our algorithm 3D-OGSE and compare it with Liu et al. \cite{upenn}, Gao and Shen \cite{hkust}, and Usenko et. al \cite{bsplineusenko2017real}. Overall, we observe a $4-10$ times improvement in the performance over the prior methods, while our trajectory cost are comparable. {\color{black} Since \cite{bsplineusenko2017real} is a trajectory optimization based method, there is no path generation, thus the table entries for this case is empty.}}
 \scalebox{0.95}{
\begin{tabular}{|p{1.2cm}|p{0.85cm}|p{0.8cm}|p{1.0cm}|p{1.2cm}|p{1.2cm}|}
    \hline
     \multirow{2}{1cm}{Method} & & \multicolumn{3}{c|}{Computation Time (ms)} & \multirow{2}{1cm}{Trajectory Cost (snap (m/s$^4$)) } \\
     \cline {2-5}
     & & Total Time & Path Computation & Trajectory Generation & \\
     \hline
     \hline
    \multirow{3}{1.5cm}{3D-OGSE (FOV:$360\degree$)}&Mean&1.345&0.536&0.809&51.562\\
    \cline{2-6}
    &Max&4.745&0.934&3.811&96.562\\
    \cline{2-6}
    &Std dev&1.452&0.441&1.011&17.357\\
    \hline
     \multirow{3}{1.5cm}{3D-OGSE (FOV:$120\degree$)}&Mean&1.726&0.678&1.048&53.367\\
    \cline{2-6}
    &Max&4.923&1.127&3.796&102.378\\
    \cline{2-6}
    &Std dev&3.167&0.978&2.189&20.126\\
    \hline
    \multirow{3}{1.5cm}{Liu et al. \cite{upenn}}&Mean&9.324&3.371&5.952&60.577\\
    \cline{2-6}
    &Max&15.729&6.437&9.292&101.578\\
    \cline{2-6}
    &Std dev&2.245&0.378&1.873&20.178\\
    \hline
    \multirow{3}{1.5cm}{Gao and Shen \cite{hkust}} & Mean & 13.547 & 5.509 & 8.038 & 62.189\\
    \cline{2-6}
    &Max&25.265&10.638&14.627&105.272\\
    \cline{2-6}
    &Std dev&4.352&1.283&3.069&12.467\\
    \hline
    \multirow{3}{1.5cm}{Usenko et. al \cite{bsplineusenko2017real}} & Mean & 5.512 & - & 5.512 & 58.986\\
    \cline{2-6}
    &Max&7.653&-&7.653&96.463\\
    \cline{2-6}
    &Std dev&2.568&-&2.568&16.578\\
    \hline
\end{tabular}
 }
 \caption{ We tabulate the computation time and trajectory cost (snap) averaged over 100 generated trajectories for an agent operating in the random forest environment (Fig.~\ref{fig:real_traj_1}). We highlight the breakdown of the computation time between path generation and trajectory optimization for our algorithm 3D-OGSE and compare it with Liu et al. \cite{upenn}, Gao and Shen \cite{hkust}, and Usenko et. al \cite{bsplineusenko2017real}. Overall, we observe a $4-10$ times improvement in the performance over the prior methods, while our trajectory cost are comparable. {\color{black} Since \cite{bsplineusenko2017real} is a trajectory optimization based method, there is no path generation, thus the table entries for this case is empty.}}
%}
\label{Tab:ComputationTime}
%\end{center}
%\label{tab:computation_time}
\vspace{0pt}
\end{table}

%In this section we compare the computation time and trajectory cost (in terms of snap) of 3D-OGSE (our method) with state-of-the-art methods \cite{upenn} and \cite{hkust}. 
Table \ref{Tab:ComputationTime} summarizes the computation time and trajectory cost for 3D-OGSE and other algorithms under comparison.%for field-of-view (FOV) equal to $360\degree$ and limited FOV ($120\degree$, realistic sensor) 
%with methods presented by Liu et al.~\cite{upenn}, Gao et al.~\cite{hkust}, and Usenko et al.~\cite{bsplineusenko2017real}. 
We evaluate two implementations of 3D-OGSE, one with a sensor field-of-view (FOV) of $360\degree$, while the other one is for a restricted sensor FOV of $120\degree$. The values tabulated are averaged over 100 generated trajectories, and the methods were compared in the random forest environment (Fig.~\ref{fig:real_traj_1}) with the same start and goal positions. The maximum velocity and acceleration limits were taken as $3m/s$ and $2m/s^2$, respectively.
%To compare our algorithm with these methods, the agent is simulated in the same random forest environment with the same initial and goal positions. % starts at the same initial position while the agent operates in the random forest environment. % to compare the runtime of our method with \cite{upenn} and \cite{hkust}. 
It can be observed %from Table \ref{Tab:ComputationTime} 
that the trajectory cost of 3D-OGSE is comparable to prior methods under comparison, while the computation time for our method is significantly lower than them. The main advantage derived from the consideration of sensing shape based on the notion of generalized shape is that it leads to having fewer edges in the overall connected path. Subsequently, the number of connected segments in the QP formulation are few, thereby significantly reducing the computation time for optimizing trajectory about the generated path as is reflected in Table \ref{Tab:ComputationTime}. %compared to other algorithms in literature.}  }

\subsubsection{Restricted FOV with Noisy Sensing}
To simulate realistic scenarios, we consider restricted FOV with noisy sensing by on-board sensor. While the FOV is limited to $120\degree$, the noisy sensing is characterized by  %our sensor to noisy and have a limited FOV of $120\degree$. At every planning iteration, 
the dimension of the sensing region $\boldsymbol{x}$ being chosen from a truncated Gaussian distribution with mean as its nominal sensing region dimension $(\mu_s)$ and standard deviation $(\sigma_s)$ as below.%, given by the following probability density function
\begin{align}
  f_s(\boldsymbol{x} , \mu_s, \sigma_s, a)=\frac{1}{\sigma_s} \frac{\Phi\left(\frac{\boldsymbol{x}-\mu_s}{\sigma_s}\right)}{\Phi\left(\frac{a}{\sigma_s}\right)-\Phi\left(\frac{-a}{\sigma_s}\right)}  
  \label{eqn:sensor_noise}
\end{align}
where, $\Phi$ is the cumulative Gaussian distribution. For simulations, we choose $\mu_s=5$m (half of side-length 10m of nominal cubic sensing region) and $\sigma_s=1m$. And, $a$ is selected as 0.1m, 0.3m and 0.5m indicating varied level of sensing uncertainty. From Table \ref{tab:sensor_noise_results} it can be observed that with increase in $a$ or sensing uncertainty, for low obstacle-density the number of collided trajectories increase linearly. However, for higher sensing uncertainty and higher obstacle densities the number of collided trajectories increase superlinearly. In this context, it is important to note that for ideal sensing with full and restricted FOV, our 3D-OGSE leads to no collided trajectory (result of which is given in Table \ref{Tab:ComputationTime}), which can also be verified from the theoretical guarantees given in Theorem \ref{theorem:probabilistic_completeness} and Proposition \ref{lemma:dynamic_feasible}.
% {\color{blue} units for the mean?}
% \begin{align}
%     \Phi(\xi)=\frac{1}{\sqrt{2 \pi}} e^{ \left(-\frac{1}{2} \xi^{2}\right)}
% \end{align}
\begin{table}[H]
    \centering
    \scalebox{0.99}{
    \begin{tabular}{ |c|c|c|c| } 
 \hline
 3D-OGSE& Low (1 obs./$m^2$) & Med.(2 obs./$m^2$) & High(3 obs./$m^2$) \\ \hline
 a=0.1m & 91 & 101 & 139 \\ \hline
 a=0.3m & 98 & 107 & 185 \\  \hline
 a=0.5m & 106 & 135 & 289 \\
 \hline
\end{tabular}}
    \caption{Table indicates the number of trajectories that collided with obstacles in a scenario with sensor noise \eqref{eqn:sensor_noise} and sensor FOV of $120\degree$. The number reported are out of a 1000 generated trajectories, in different environments with obstacle density varying from $1$ to $3 obs/m^2$, and $a$ between 2\% and 10\% of $\mu_s$.}% 1000 trajectories have been simulated using 3D-OGSE method taking FOV$=120\degree$. We tabulate the variation of number of trajectories that collided with the obstacles with density of obstacles}
    \label{tab:sensor_noise_results}
\end{table}

% \begin{table}[H]
%  \centering
% %\resizebox{.9\columnwidth}{70}{%
% \scalebox{0.95}{
% \begin{tabular}{|p{1.2cm}|p{0.8cm}|p{0.8cm}|}
%     \hline
%      \multirow{1}{1cm}{a}  & \multicolumn{1}{c|}{Number of trajectories that collided} 
%     % & \multirow{2}{1cm}{Trajectory Cost (snap (m/s$^4$)) } 
%     \\
%     %  \cline {2-5}
%     %  & & Total Time & Path Computation & \\
%     %  \hline
%      \hline
%     \multirow{1}{1.5cm}{a=1m}&-\\
%     % \cline{2-6}
%     \hline
%     {a=1.5m}&-\\
%      \hline
%     {a=2.5m}&-\\
%     % \cline{2-6}
%     % &Std&1.452&0.441&1.011\\
%     % \hline
%     % \multirow{3}{1.5cm}{Liu et al. \cite{upenn}}&Mean&9.324&3.371&5.952\\
%     % \cline{2-6}
%     % &Max&15.729&6.437&9.292\\
%     % \cline{2-6}
%     % &Std&2.245&0.378&1.873\\
%     % \hline
%     % \multirow{3}{1.5cm}{Gao and Shen \cite{hkust}} & Mean & 13.547 & 5.509 & 8.038 \\
%     % \cline{2-6}
%     % &Max&25.265&10.638&14.627\\
%     % \cline{2-6}
%     % &Std&4.352&1.283&3.069\\
%     % \hline
%     % \multirow{3}{1.5cm}{Usenko et. al \cite{bsplineusenko2017real}} & Mean & 5.512 & - & 5.512 \\
%     % \cline{2-6}
%     % &Max&7.653&-&7.653\\
%     % \cline{2-6}
%     % &Std&2.568&-&2.568\\
%     \hline
% \end{tabular}
% }
% %}
% \caption{Thousand trajectories have been simulation. Fig. shows the number of trajectories that collided with the obstacles for different values of $a$}
% \label{Tab:ComputationTime}
% %\end{center}
% %\label{tab:computation_time}
% \vspace{-11pt}
% \end{table}
\subsubsection{Obstacle Density}

Fig. \ref{fig:obstacle_density} shows the variation in computation time with an increase in the obstacle density in the random forest environment. For this evaluation, we varied the obstacle density from $1$ to $3$ obstacles$/\text{m}^2$. We observe that even with the increase in obstacle density, the corresponding increase in computation time is significantly lower in our method than the methods under comparison.
% Liu et al.~\cite{upenn}, Gao et al.~\cite{hkust}, and Usenko et al. \cite{bsplineusenko2017real}.
The computation times reported are averaged over 100 generated trajectories.

\begin{figure}[t]
\centering
\includegraphics[width=0.48\textwidth, height=3.5cm]{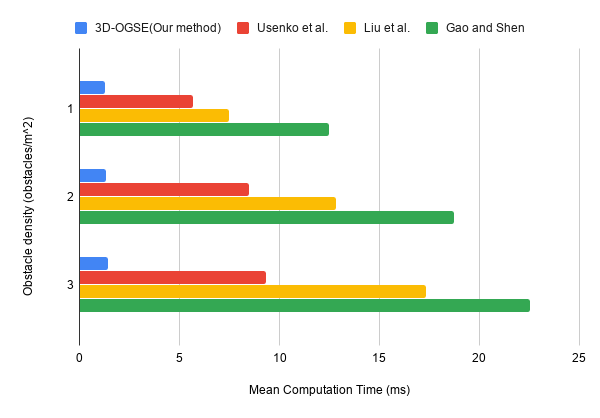}
\caption{We observe the trajectory computation time to increase with an increase in the obstacle density (obstacles per unit area) of the random forest environment. But, the observed increase in computation time for the 3D-OGSE is considerably smaller compared to prior methods by Liu et al. \cite{upenn}, Gao and Shen \cite{hkust}, and Usenko et al. \cite{bsplineusenko2017real}. The figure summarizes the computation times for the four methods for 1, 2, and 3 obstacles per Sq.m. The values stated are averaged over 100 trajectories generated for randomly chosen start and goal positions. }
\label{fig:obstacle_density}
\vspace{-5pt}
\end{figure}

\subsubsection{Narrow Passages}
\begin{figure}[t]
%\begin{center}
%\captionsetup[subfigure]{justification=centering}
\centering
%\begin{subfigure}{0.23\textwidth}
\centering
%\psfragscanon
%\psfrag{O}[Br][Br][1][0]{$O$ \hspace{-0.2cm}}
%\centerline{\includegraphics[width=1.6\textwidth,height=9.5cm,keepaspectratio,trim={0.5cm 0.0cm .1cm .08cm},clip]{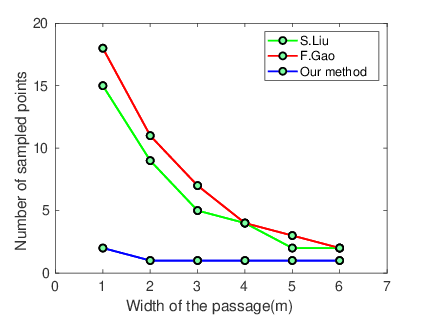}}
\begin{subfigure}{0.23\textwidth}
\includegraphics[width=4.3cm, height=3.2cm]{narrow_passages.png}
\caption{}
\label{narrow_passages}
\end{subfigure}
\begin{subfigure}{0.23\textwidth}

\includegraphics[width=4.3cm, height=3.2cm]{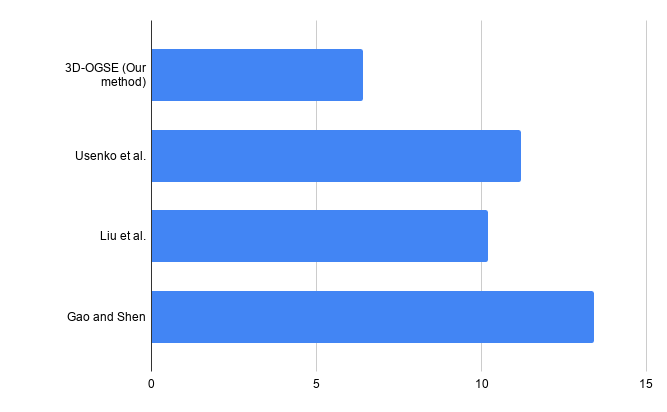}
\caption{}\label{replanning_triggers}
\end{subfigure}
%\caption{a}
%\label{narrow_passages}
%\end{subfigure}
\caption{(a):  We observe that our method samples lower number of points for the narrow passage than Liu et al. \cite{upenn} and Gao et al. \cite{hkust}. We evaluate on scenarios with passage width ranging from $1$ to $6m$.(b): The average number of replanning triggers averaged over 100 trajectories in random forest environment using our method, Liu et al. \cite{upenn}, Gao et al. \cite{hkust} and Usenko et al. \cite{bsplineusenko2017real}.}
\vspace{-12pt}
\end{figure}
We evaluate the performance of the algorithms in multiple narrow passages of fixed length and different widths. {{Fig. \ref{narrow_passages}}} shows the variation in the number of sampled points (in the narrow passage) when computing a feasible path and the passage width. We observe in our method the number of sampled points remain almost constant even with decrease in passage width, while the number of sampled points increases in~\cite{upenn,hkust}. %This is because the computed generalized shape in 3D-OGSE is relatively less conservative; hence, it requires fewer sample points to compute a path through the passage. 
Further from {\color{black}{Fig.~\ref{horse_shoe}}}, we observe our algorithm performs well even in complex scenarios such as the `horseshoe' environment.

\subsubsection{Replanning Triggers}
We compare the number of replanning triggers required to reach the goal position for our proposed method, \cite{upenn}, \cite{hkust}, and \cite{bsplineusenko2017real}. We considered the random forest environment for this scenario and generated 100 trajectories from a start to a goal position. The average number of replanning triggers required to reach the goal is then calculated for these 100 trajectories and is shown in Fig. \ref{replanning_triggers}. We observe that our methods required a lower number of replanning triggers on average than other methods under comparison.%~\cite{upenn,hkust,bsplineusenko2017real}.

 %We generated 100 trajectories from a start to goal position in the random forest environment for our proposed method, \cite{upenn}, \cite{hkust} and \cite{bsplineusenko2017real}. The average number of replanning triggers required to reach the goal positions is then calculated for these 100 trajectories and is shown in Fig. \ref{replanning_triggers}. We observe that our methods required lower number of replanning triggers than~\cite{upenn,hkust,bsplineusenko2017real}.

\begin{figure}[t]
%\begin{center}
%\captionsetup[subfigure]{justification=centering}
\centering
\begin{subfigure}{0.22\textwidth}
\centering
% \psfragscanon
% \psfrag{O}[Br][Br][1][0]{$O$ \hspace{-0.2cm}}
%\centerline{\includegraphics[width=1.6\textwidth,height=9.5cm,keepaspectratio,trim={0.5cm 0.0cm .1cm .08cm},clip]{explain_shape.eps}}
\centerline{\includegraphics[width=3.2cm,height=2.8cm]{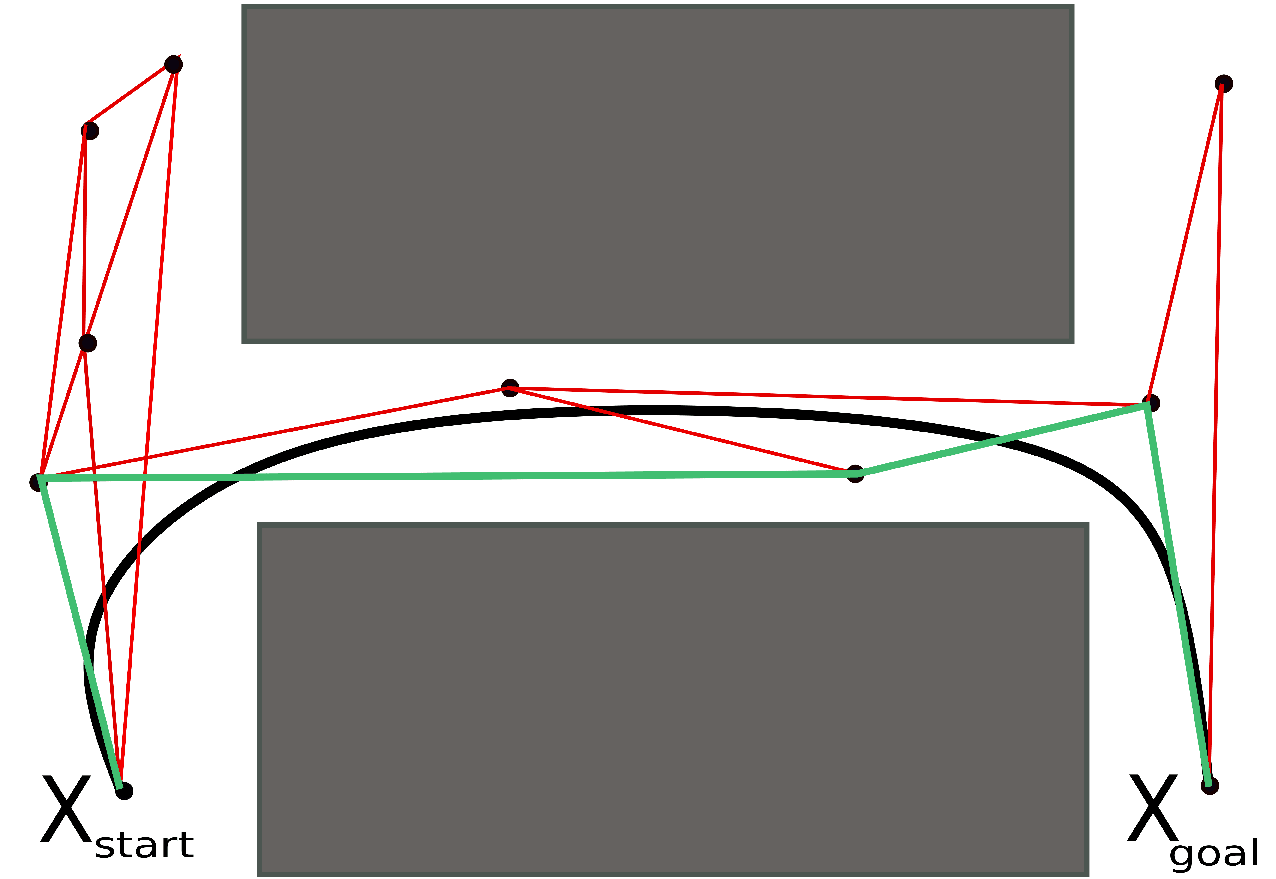}}
\caption{}
\label{fig:cases_for_shape}
\label{explain_shape}
\end{subfigure}
\begin{subfigure}{0.22\textwidth}
\centering

\centerline{\includegraphics[width=3.2cm,height=2.8cm]{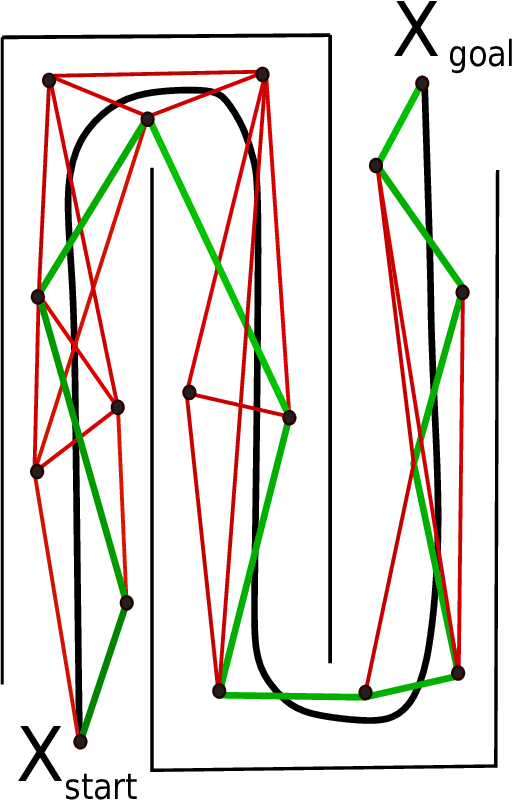}}
\caption{}
\label{convexify}
\end{subfigure}
\caption{A horseshoe environment and a thin corridor is considered. We observe that our proposed 3D-OGSE algorithm is able to generate a feasible trajectory from the start to goal location in these scenarios. The width of the passage in horseshoe environment and thin corridor is 1m.{The green lines denote the shortest path from $\boldsymbol{X}_\text{init}$ to $\boldsymbol{X}_\text{goal}$ and black lines denote the smooth trajectory between these points}}
\label{horse_shoe}
\end{figure}

\section{\label{sec:conclusion}Conclusion}
In this paper, we propose a novel online safe and smooth trajectory generation algorithm, 3D-OGSE, for autonomous navigation of a 3-D agent in an unknown 3-D obstacle-cluttered environment. Our algorithm uses local information about environment to continuously re-plan trajectories to direct the agent towards a goal position over multiple planning horizons / iterations that are mechanised by re-planning trigger of a receding horizon planner. Our algorithm is fast and requires $1.4$ ms on average to re-plan trajectories, which is approximately 4-10 times faster than \cite{upenn}, \cite{hkust}, and \cite{bsplineusenko2017real}. Also, in complex environments like narrow passages also the efficiency of the 3D-OGSE is found evident.
% {\color{red}Can you add some limitations (1-2 lines)?}
% Our method has some limitations. We assume that the point cloud captures all the obstacles in the scene accurately and we are able to segment the point cloud data into individual obstacles. In addition, we require the accurate point cloud data and the agent position in the environment. 
We plan on extending our algorithm to noisy sensor settings, where trajectories are to be generated by taking sensor noise also into account. Further, we plan on extending this method for scenarios with dynamic obstacles and evaluating the performance on test-bed with real-world 3-D agents .%system in real-world scenarios.
% Also we plan to incorporate medial axis with our proposed method to guarantee safety and maximum clearance with the obstacles. Good methods to compute approximate medial axis from sensor or point cloud data \cite{sud2007homotopy,tagliasacchi20163d} and they can be combined with our formulation. This is useful in highly uncertain and cluttered environments and in practical applications where safety is the main concern.
% {In future we plan to implement 3D-OGSE on a real agent and further extend this algorithm for multi agent systems}
\bibliography{root.bib}
\end{document}